\newif\ifjournal
\def \figdir {.}
\newcommand{\mbf}[1]{\mathbf{#1}}              % math bold face
\newcommand{\mbif}[1]{\boldsymbol{#1}}         % math bold italic face
\newcommand{\tens}[1]{\mbf{#1}}                % tensors, matrices
\newcommand{\gtens}[1]{\mbif{#1}}              % Greek tensors, matrices
\newcommand{\vect}[1]{\mbif{#1}}               % vectors
\newcommand{\sgn}{\mathsf{sgn}}
\newcommand{\package}{leaderbot}
\newcommand{\no}{\ding{55}}
\newcommand{\org}{\(\ast\)}
\newcommand{\gen}{}
\definecolor{Gray}{gray}{0.94}
\newcolumntype{g}{>{\columncolor{Gray}}r}
\def \ie {i.e.,\ }
\def \eg {e.g.,\ }
\newcommand{\authrone}{Siavash Ameli}
\newcommand{\affilone}{ICSI and Department of Statistics}
\newcommand{\addrsone}{University of California, Berkeley}
\newcommand{\emailone}{sameli@berkeley.edu}
\newcommand{\authrtwo}{Siyuan Zhuang}
\newcommand{\affiltwo}{Department of Computer Science}
\newcommand{\addrstwo}{University of California, Berkeley}
\newcommand{\emailtwo}{siyuan\_zhuang@berkeley.edu}
\newcommand{\authrthree}{Ion Stoica}
\newcommand{\affilthree}{Department of Computer Science}
\newcommand{\addrsthree}{University of California, Berkeley}
\newcommand{\emailthree}{istoica@cs.berkeley.edu}
\newcommand{\authrfour}{Michael W. Mahoney}
\newcommand{\affilfour}{ICSI, LBNL, and Department of Statistics}
\newcommand{\addrsfour}{University of California, Berkeley}
\newcommand{\emailfour}{mmahoney@stat.berkeley.edu}
    \author{%
        \parbox[t]{0.49\textwidth}{%
            \authrone \\ \normalfont\affilone \\ \normalfont\addrsone \\ \normalfont\texttt{\emailone}%
        }%
        \hfill
        \parbox[t]{0.49\textwidth}{%
            \authrtwo \\ \normalfont\affiltwo \\ \normalfont\addrstwo \\ \normalfont\texttt{\emailtwo}%
        }%
        \AND
        \parbox[t]{0.49\textwidth}{%
            \authrthree \\ \normalfont\affilthree \\ \normalfont\addrsthree \\ \normalfont\texttt{\emailthree}%
        }%
        \hfill
        \parbox[t]{0.49\textwidth}{%
            \authrfour \\ \normalfont\affilfour \\ \normalfont\addrsfour \\ \normalfont\texttt{\emailfour}%
        }%
    }
    \author[1]{\authrone}
    \author[2]{\authrtwo}
    \author[3]{\authrthree}
    \author[4]{\authrfour}
    \affil[1]{\small{\textit{\affilone, \addrsone} \protect\\ \href{mailto:\emailone}{\protect\nolinkurl{\emailone}}} \vspace{2mm}}
    \affil[2]{\small{\textit{\affiltwo, \addrstwo} \protect\\ \href{mailto:\emailtwo}{\texttt{\emailtwo}}} \vspace{2mm}}
    \affil[3]{\small{\textit{\affilthree, \addrsthree} \protect\\ \href{mailto:\emailthree}{\protect\nolinkurl{\emailthree}}} \vspace{2mm}}
    \affil[4]{\small{\textit{\affilfour, \addrsfour} \protect\\ \href{mailto:\emailfour}{\protect\nolinkurl{\emailfour}}} \vspace{2mm}}
\def \PYPIURL {\url{https://pypi.org/project/leaderbot}}
\def \DOCURL {\url{https://leaderbot.org}}
\def \GITHUBURL {\url{https://github.com/suquark/leaderbot}}
\title{A Statistical Framework for Ranking LLM-Based Chatbots}
\date{}
\begin{document}

\addtocontents{toc}{\protect\setcounter{tocdepth}{-1}}  % Hide main sections from ToC

\maketitle

\begin{abstract}
    Large language models (LLMs) have transformed natural language processing, with frameworks like Chatbot Arena providing pioneering platforms for evaluating these models. By facilitating millions of pairwise comparisons based on human judgments, Chatbot Arena has become a cornerstone in LLM evaluation, offering rich datasets for ranking models in open-ended conversational tasks. Building upon this foundation, we propose a statistical framework that incorporates key advancements to address specific challenges in pairwise comparison analysis. First, we introduce a factored tie model that enhances the ability to handle ties---an integral aspect of human-judged comparisons---significantly improving the model's fit to observed data. Second, we extend the framework to model covariance between competitors, enabling deeper insights into performance relationships and facilitating intuitive groupings into performance tiers. Third, we resolve optimization challenges arising from parameter non-uniqueness by introducing novel constraints, ensuring stable and interpretable parameter estimation. Through rigorous evaluation and extensive experimentation, our framework demonstrates substantial improvements over existing methods in modeling pairwise comparison data. To support reproducibility and practical adoption, we release \texttt{leaderbot}, an open-source Python package implementing our models and analyses.
\end{abstract}

\section{Introduction}

The rapid advancement of large language models (LLMs) has transformed natural language processing, enabling breakthroughs across diverse tasks. As these models evolve, the need for effective evaluation methods becomes crucial for fostering innovation and ensuring that LLMs align with human preferences. Traditional benchmarks, such as MMLU \citep{HENDRYCKS-2021} and HumanEval \citep{CHEN-2021}, play an important role in assessing specific capabilities of LLMs. However, they often fall short in capturing the nuanced, real-world interactions characteristic of open-ended conversational tasks, particularly those seen in chatbot applications.

To address this gap, crowdsourced evaluation platforms have emerged, with Chatbot Arena \citep{CHIANG-2024, ZHENG-2023} standing out as a pioneering framework. By facilitating millions of pairwise comparisons between LLMs based on human judgments, Chatbot Arena has become one of the largest and most credible datasets \citep{ZHENG-2024} for chatbot evaluation. Its design more closely reflects the open-ended nature of chatbot usage, providing unparalleled diversity and robustness in assessing model performance. In its first year, the platform amassed over two million votes across more than 150 state-of-the-art models, gaining adoption by leading institutions such as OpenAI, Google, and Hugging Face. This unparalleled scale and impact have solidified Chatbot Arena as a cornerstone in the LLM evaluation ecosystem.

The ranking methodology employed in Chatbot Arena relies on the Elo rating system \citep{ZERMELO-1929, BRADLEY-1952}, which is well-suited for competitive settings with clear win-loss outcomes. However, the Elo system does not account for ties---a notable portion of human-judged comparisons---or for modeling deeper relationships between competitors, such as correlations in performance. Addressing these aspects presents an opportunity to build upon the success of Chatbot Arena, enhancing its analytical capabilities while retaining its foundational strengths.

In this paper, we propose a statistical framework that extends the foundational Elo-based approach employed in Chatbot Arena. Our contributions include:

\begin{enumerate}[labelindent=0pt, leftmargin=*]
    \item \textbf{Incorporating ties:} Ties, where two competitors are judged to perform equally, are a key feature of pairwise comparisons. To model ties, we integrated well-established methods by \citet{RAO-1967} and \citet{DAVIDSON-1970}, which define tie probabilities based on axiomatic assumptions.
    % These methods have been applied in various contexts, including recent explorations in chatbot evaluations \citep{LI-2024}.
    These methods have been applied in various contexts, from sports analytics to marketing and beyond (see \Cref{appendix:applications}).
    While these frameworks offer a solid foundation for paired comparisons, applying them directly to the Chatbot Arena dataset, with its extensive pairwise comparisons, highlighted the need for further adaptation: these models yielded errors exceeding 10\% when applied to tie outcomes.

    To address these challenges, we propose a novel factored tie model, which generalizes these frameworks by uncovering latent structures in tie patterns across pairs of competitors. This factor analysis substantially improves model fit, reducing errors in fitting tie data by two orders of magnitude. Notably, this enhancement also extends to win/loss predictions, achieving comparable improvements. Details of tie modeling, including a background and our generalizations, are provided in \Cref{sec:stat-model}.

    \item \textbf{Incorporating covariance:} We extend paired comparison models by introducing Thurstonian representations to capture covariance structures between competitors (\Cref{sec:cov}), enabling deeper exploration of relationships beyond rankings, such as correlations in performance. Building on our theoretical analysis of covariance's structural non-uniqueness and equivalence classes (\Cref{sec:unidentifibility,sec:cov-details}), we show that derived metrics from covariance, such as dissimilarity metrics, are unique and provide interpretable insights. These metrics enable visualization techniques to uncover latent patterns, such as performance trends, and clustering techniques to group competitors into performance tiers based on relative strengths.

    \item \textbf{Addressing optimization challenges through constraints:} Paired comparison models often exhibit parameter non-uniqueness due to structural symmetries in the model, resulting in equivalent parameter configurations that leave the likelihood invariant. These symmetries lead to valid but indistinguishable solutions, compromising convergence stability during likelihood optimization. To resolve these issues, we introduce novel constraints that regularize the parameter space, ensuring stable and interpretable parameter estimation (\Cref{sec:const,sec:const-cov}).
\end{enumerate}

In addition to developing our framework, we conducted comprehensive evaluations and analyses to validate its performance and interpret its results. Empirical evaluations (\Cref{sec:results}) demonstrate the model’s fit to observed data, supported by extensive experimentation (\Cref{sec:evaluations}) on model selection, goodness-of-fit, and prediction metrics. Detailed inference analyses (\Cref{sec:rank-vis}) further explore competitor relationships, offering visual insights through clustering and ranking similarity analyses (\Cref{sec:cov-details,sec:analysis-rank}). We also examine the relationship between LLM characteristics and their performance scores in \Cref{sec:epoch-ai}. Furthermore, \Cref{appendix:applications} highlights the framework's potential applicability beyond ranking and chatbot evaluations to other areas of machine learning.

To support reproducibility and broader adoption, we provide \texttt{leaderbot}, an open-source Python package implementing our statistical framework with tools for data processing, model fitting, and visualization. This ensures that all results in this paper are fully reproducible (\Cref{app:software}).

\section{Statistical Model} \label{sec:stat-model}

\subsection{Problem Statement}

Consider a paired-comparison experiment involving \(m \geq 2\) competitors (here, LLM chatbots), indexed by the set \(V \coloneqq \{1, \dots, m\}\). Let \(E \subseteq \{ \{i, j\} ~|~ i, j \in V\}\) denote the set of unordered pairs of competitors that have been compared in the experiment. We assume the graph \(\mathcal{G}(V, E)\) is connected.

We define the \(m \times m\) matrix \(\tens{W} = [w_{ij}]\), where \(w_{ij}\) represents the frequency with which competitor \(i\) wins against competitor \(j\), and \(w_{ji}\) represents the frequency with which \(i\) loses to \(j\). Similarly, we define the symmetric \(m \times m\) matrix \(\tens{T} = [t_{ij}]\), where \(t_{ij}\) denotes the frequency of ties between competitors \(i\) and \(j\), with \(t_{ij} = t_{ji}\). We set \(w_{ij} = w_{ji} = t_{ij} = 0\) whenever \(\{i , j\} \notin E\) to reflect the absence of comparisons between competitors \(i\) and \(j\). The total number of comparisons between competitors \(i\) and \(j\) is denoted by \(n_{ij}\), where \(n_{ij} = w_{ij} + w_{ji} + t_{ij}\). The triple \((\mathcal{G}, \tens{W}, \tens{T})\) constitutes the input data for our problem.

Our objective is to rank the competitors based on their performance in the overall comparisons. To formalize this in a probabilistic framework, we define \(P(i \succ j \,|\, \{i, j\})\) as the probability that competitor \(i\) wins against competitor \(j\), and \(P(i \sim j \,|\, \{i, j\})\) as the probability that \(i\) and \(j\) tie. For notational simplicity, we often denote these probabilities by \(P_{i \succ j}\) and \(P_{i \sim j}\), respectively.

A broad class of parametric models (which we will discuss in detail later) assumes the existence of a \emph{score} array \(\vect{x} = (x_1, \dots, x_m) \in \mathbb{R}^m\), which defines the ranking. Specifically, the ranking is inferred by a bijection from \(V\) to itself that orders the scores \(x_i\), such that \(x_i > x_j\) implies \(i\) is ranked higher than \(j\), denoted by the binary relation \(i \succ j\).

The score vector \(\vect{x}\) forms part of the model's parameters, denoted by \(\vect{\theta}\), which also includes other parameters governing the probability of each outcome. A common approach for estimating these parameters is the maximum likelihood method. The likelihood function \(\mathcal{L}(\vect{\theta} \,|\, \mathcal{G}, \tens{W}, \tens{T})\) is defined as the product of multinomial distributions for each compared pair \(\{i, j\} \in E\), given by
\begin{equation}
    \mathcal{L}(\vect{\theta} \,|\, \mathcal{G}, \tens{W}, \tens{T}) = \prod_{\{i, j\} \in E} \frac{n_{ij}!}{w_{ij}! w_{ji}! t_{ij}!} P_{i \succ j}^{w_{ij}}(\vect{\theta}) P_{i \prec j}^{w_{ji}}(\vect{\theta}) P_{i \sim j}^{t_{ij}}(\vect{\theta}). \label{eq:likelihood}
\end{equation}
The parameter estimate \(\vect{\theta}^{\ast}\) is then obtained by maximizing the log-likelihood function \(\ell(\vect{\theta}) \coloneqq \log \mathcal{L}(\vect{\theta})\), i.e., \(\vect{\theta}^{\ast} = \operatorname{argmax}_{\vect{\theta}} \ell(\vect{\theta})\).

\subsection{Probabilistic Models} \label{sec:models}

A parametric model for the above probabilities must satisfy two fundamental axioms. First, by the law of total probability, we have \(P_{i \succ j} + P_{i \prec j} + P_{i \sim j} = 1\). Second, the model should respect the concept of transitivity in ranking, though in a probabilistic setting. Rather than standard transitivity, where \(i \succ j\) and \(j \succ k\) imply \(i \succ k\), we adopt the principle of \emph{stochastic transitivity} (see, e.g., \cite{SHAH-2017,SHAH-2018}). 

In particular, we are interested in \emph{strong stochastic transitivity}, which states that if \(P_{i \succ j} \geq \frac{1}{2}\) and \(P_{j \succ k} \geq \frac{1}{2}\), then \(P_{i \succ k} \geq \max \{ P_{i \succ j}, P_{j \succ k} \}\). A key sub-class of strong stochastic transitivity, and the focus of this work, is \emph{linear stochastic transitivity}. This property is characterized by the existence of an increasing \emph{comparison function} \(F: \mathbb{R} \to [0, 1]\) and a \emph{merit function} \(\zeta: \mathbb{R} \to \mathbb{R}\), such that \(P_{i \succ j} = F(\zeta(x_i) - \zeta(x_j))\).

In the following subsections, we describe several common models of paired comparison that satisfy these properties.

\subsubsection{Bradley-Terry Model} \label{sec:bt}

One of the most widely used models for paired comparison was first introduced by \cite{ZERMELO-1929} and later rediscovered by \cite{BRADLEY-1952}, leading to the model being named after them. The Bradley-Terry model forms the basis of the well-known Elo rating system, which is extensively used by the World Chess Federation. In this model, the probabilities of win and loss are given by
\begin{equation} \label{eq:bt}
    P(i \succ j \,|\, \{i, j\}) \coloneqq \frac{\pi_i}{\pi_i + \pi_j} \quad \text{and} \quad
    P(i \prec j \,|\, \{i, j\}) \coloneqq \frac{\pi_j}{\pi_i + \pi_j},
\end{equation}
where \(\pi_i \coloneqq e^{x_i}\). This model assumes that \(x_i - x_j\) follows a logistic distribution, as shown by
\begin{equation} \label{eq:bt-x}
    P(x_i - x_j > 0) = \frac{1}{1 + e^{-(x_i - x_j)}}.
\end{equation}
Variants of this model, such as the one proposed by \cite{GLENN-1960}, assume a standard normal distribution instead of the logistic. However, the logistic distribution is typically preferred in paired comparison settings due to its heavier tails, which provide better fit for real-world data, and its computational advantages and tractability \citep{BOCKENHOLT-2001}.

\subsubsection{Models with Ties}

The Bradley-Terry (BT) model does not account for ties (\ie \(P_{i \sim j} = 0\)), making it more suited for balanced paired comparisons, such as zero-sum games. However, in our application of ranking LLM chatbot agents, ties frequently occur, which poses challenges for applying the BT model directly.

Previous work, such as \cite{CHIANG-2024}, addressed this by treating a tie as halfway between a win and a loss, modifying the outcome matrix as \(\tens{W} \leftarrow \tens{W} + \frac{1}{2}\tens{T}\). While this approach provides a straightforward way to handle ties, other extensions of the BT model incorporate ties through axiomatic frameworks, which we explore in the following sections.

One such generalization is the model of \cite{RAO-1967}, which modifies the logistic distribution to account for ties. The resulting probabilities for win, loss, and tie are given by
\begin{subequations} \label{eq:rk}
\begin{align}
    P(i \succ j \,|\, \{i, j\}) &= P(x_i - x_j > \eta) \coloneqq \frac{\pi_i}{\pi_i + \nu \pi_j}, \\
    P(i \prec j \,|\, \{i, j\}) &= P(x_i - x_j < -\eta) \coloneqq \frac{\pi_j}{\nu \pi_i + \pi_j}, \\
    P(i \sim j \,|\, \{i, j\}) &= P(| x_i - x_j| < \eta) \coloneqq \frac{\pi_i \pi_j (\nu^2 - 1)}{(\pi_i + \nu \pi_j) (\nu \pi_i + \pi_j)},
\end{align}
\end{subequations}
where \(\nu \coloneqq e^{\eta} \geq 1\) and \(\eta \geq 0\) is a threshold parameter to be optimized. In this model, if the difference between competitors' scores is less than the threshold \(\eta\), the judge is unable to distinguish between competitors \(i\) and \(j\), resulting in a tie. Setting \(\eta = 0\) reduces this model to the standard BT model.

Another extension of the BT model, introduced by \cite{DAVIDSON-1970} and based on the axiom of choice of \cite{LUCE-1959}, models ties differently:
\begin{subequations} \label{eq:davidson}
    \begin{align}
        P(i \succ j \,|\, \{i, j\}) &\coloneqq \frac{\pi_i}{\pi_i + \pi_j + \nu \sqrt{\pi_i \pi_j}}, \\
        P(i \prec j \,|\, \{i, j\}) &\coloneqq \frac{\pi_j}{\pi_i + \pi_j + \nu \sqrt{\pi_i \pi_j}}, \\
        P(i \sim j \,|\, \{i, j\}) &\coloneqq \frac{\nu \sqrt{\pi_i \pi_j}}{\pi_i + \pi_j + \nu \sqrt{\pi_i \pi_j}},
    \end{align}
\end{subequations}
where \(\nu \coloneqq e^{\eta}\) and \(\eta \in \mathbb{R}\) is a threshold parameter for tie to be optimized. Here, setting \(\eta = -\infty\) reduces this model to the BT model. Note that in both the Rao-Kupper and Davidson models, the probability of a tie increases as the difference in scores decreases.

\subsection{A Generalization for Models with Ties} \label{sec:gen-tie}

The original Rao-Kupper and Davidson models each employ a single parameter for ties, \(\nu\). However, as our numerical results will demonstrate, one tie parameter is insufficient to capture the complexity of ties across all pairs of competitors. To address this, we propose a generalization of these models by incorporating additional parameters, which, to our knowledge, is a novel extension.

In this generalized model, the tie parameter \(\nu = e^{\eta}\) is replaced with a pair-specific parameter \(\nu_{ij} = e^{\eta_{ij}}\), where \(i, j \in V\), subject to the symmetry condition \(\nu_{ij} = \nu_{ji}\). This modification introduces \(|E|\) parameters, potentially leading to overfitting. To mitigate this, we propose a reduced model. Let the symmetric \(m \times m\) matrix \(\tens{H} = [\eta_{ij}]\) represent the pairwise tie parameters. Rather than treating all \(\eta_{ij}\) as independent parameters, we introduce the following factor model to construct \(\tens{H}\) by
\begin{equation} \label{eq:tie-factor}
    \tens{H} \coloneqq
    \begin{cases}
        \tens{G} \gtens{\Phi}^{\intercal} + \gtens{\Phi} \tens{G}^{\intercal}, & 0 < k_{\mathrm{tie}} \leq m, \\
        \eta \tens{J}, & k_{\mathrm{tie}} = 0,
    \end{cases}
\end{equation}
where \(\tens{G} = [g_{ij}]\) is an \(m \times k_{\mathrm{tie}}\) matrix of the new parameters \(g_{ij}\), \(\gtens{\Phi} = [\phi_{ij}]\) is an \(m \times k_{\mathrm{tie}}\) constant matrix of rank \(k_{\mathrm{tie}} \leq m\) consisting of predefined basis column vectors that will be discussed below, and \(\tens{J}\) is the \(m \times m\) matrix of all ones. The rank of \(\tens{H}\) is \(\max(1, \min(2k_{\mathrm{tie}}, m))\) containing \(\max(1, m k_{\mathrm{tie}})\) parameters; and thus choosing \(k_{\mathrm{tie}}\) allows us to strike a balance between the goodness of fit and the complexity of the model. Note that the case \(k_{\mathrm{tie}}=0\) reverts to the original Rao-Kupper or Davidson models where a single parameter \(\eta_{ij} = \eta\) is used.

To further motivate the construction in \eqref{eq:tie-factor}, the pairwise tie parameters \(\eta_{ij}\) (when \(k_{\mathrm{tie}} \neq 0\)) can be expressed as
\begin{equation}
    \eta_{ij} = \vect{g}_i \cdot \vect{\phi}_j + \vect{g}_{j} \cdot \vect{\phi}_i,
\end{equation}
where \(\vect{g}_i \coloneqq (g_{i1}, \dots, g_{ik_{\mathrm{tie}}})\) and \(\vect{\phi}_i \coloneqq (\phi_{i1}, \dots, \phi_{ik_{\mathrm{tie}}})\) are the \(i\)-th rows of \(\tens{G}\) and \(\gtens{\Phi}\), respectively. This formulation emphasizes the \emph{additive} nature of our proposed model, where contributions from competitors \(i\) and \(j\) are independently and symmetrically combined. While designing this generalization, we also considered adopting a \emph{multiplicative} structure (e.g., \(\tens{H} = \tens{A} \tens{A}^\intercal\)), as is commonly used in factor analysis. However, we found it unsuitable for this application: low contributions from one competitor's tie threshold parameters would disproportionately suppress the pairwise tie parameter \(\eta_{ij}\), leading to an ineffective model for ties. By contrast, our additive design reflects the cumulative nature of ties, which we found aligns naturally with the structure of pairwise comparisons and is particularly effective for modeling shared thresholds.

To ensure \(\gtens{\Phi}\) is of full rank, we design \(\gtens{\Phi}\) with orthogonal column vectors. This can be achieved, for instance, by orthogonalizing a randomly generated matrix using Gram-Schmidt orthogonalization. However, for reproducibility and to avoid using randomly generated matrices, we recommend constructing \(\gtens{\Phi}\) using discrete orthogonal polynomials with respect to a uniform weight over equally spaced points \citep{KRIECHERBAUER=2007}, which are inherently orthogonal. Such matrices can be generated, for instance, by discrete Legendre polynomials, the Hadamard transform (when \(m\) is a power of \(2\)), discrete Chebyshev polynomials \citep{CORR-2000}, or the discrete cosine transform (DCT). For simplicity, we choose the discrete cosine transform of the fourth type (DCT-IV) basis, where the elements \(\phi_{ij}\) are given by
\begin{equation} \label{eq:dct}
    \phi_{ij} = \sqrt{\frac{2}{m}} \cos \left( \frac{\pi}{4m} \left(2i-1\right) \left(2j-1\right)\right), \quad i=1, \dots, m, \quad \text{and} \quad j = 1, \dots, k_{\mathrm{tie}}.
\end{equation}

\subsection{Incorporating Covariance Using Thurstonian Representations} \label{sec:cov}

A fundamental approach to modeling paired comparisons was introduced by \cite{THURSTONE-1927} through the laws of comparative judgment, laying the foundation for psychometric choice modeling from a statistical perspective. In this section, we build on Thurstone's multivariate discriminal process to incorporate covariance structures into the paired comparison models discussed earlier, enriching the modeling framework with covariance as an additional parameter.

Thurstonian models assume that the score variables \(\vect{x}\) are stochastic processes defined by \(\vect{x} = \vect{\mu} + \vect{\epsilon}\), where \(\vect{\mu} \in \mathbb{R}^m\) is the deterministic component representing the mean of the process, and \(\vect{\epsilon}\) is a zero-mean random component with covariance \(\gtens{\Sigma} = [\sigma_{ij}]\), where \(\sigma_{ij} \coloneqq \operatorname{cov}(\epsilon_i, \epsilon_j)\), often referred to as the \emph{comparative dispersion}. In this model, \(\vect{\mu}\) and \(\gtens{\Sigma}\) serve as the parameters, with \(\vect{\mu}\) determining rankings by reflecting the expected performance of competitors.

We note that in this model, the difference between scores, which is central to the previously mentioned models, is also a stochastic process: \(x_i - x_j = \mu_{i} - \mu_{j} + \epsilon_{ij}\). Here, \(\epsilon_{ij}\) has variance \(\tens{S} = [s_{ij}]\), defined as \(s_{ij} \coloneqq \operatorname{var}(\epsilon_{ij})\), and given by \(s_{ij} = \sigma_{ii} + \sigma_{jj} - 2 \sigma_{ij}\).\footnote{This follows from \(\operatorname{var}(x_i - x_j) = \operatorname{var}(x_i) + \operatorname{var}(x_j) - 2 \operatorname{cov}(x_i, x_j)\).} This is commonly referred to as the \emph{discriminal dispersion} \citep{HEISER-1981}.

The original Thurstonian model assumes that \(\vect{x}\) follows a joint normal distribution, meaning that \(x_i - x_j\) also has a normal distribution. Let \(x_{ij} \coloneqq x_i - x_j\), \(\mu_{ij} \coloneqq \mu_{i} - \mu_{j}\), and \(y_{ij} \coloneqq (x_{ij} - \mu_{ij}) / \sqrt{s_{ij}}\). It can be shown that \(P_{i \succ j} = P(x_{ij} > 0) = \Phi(z_{ij})\) \citep{OLIVARES-1999}, where \(\Phi\) is the cumulative standard normal distribution, and
\begin{equation}
    z_{ij} \coloneqq \frac{\mu_{i} - \mu_{j}}{\sqrt{s_{ij}}}. \label{eq:zij}
\end{equation}

Although the Thurstonian discriminal process is typically applied using normal distributions, we extend this process to the models previously introduced, including the Bradley-Terry model (which uses the logistic distribution) and the Rao-Kupper and Davidson models (which incorporate ties). To our knowledge, these extensions have not been explored in the literature.

To generalize the Bradley-Terry model with the discriminal process, we assume \(y_{ij}\) follows a logistic distribution, yielding
\begin{equation} \label{eq:bt-th}
    P(i \succ j \,|\, \{i, j\}) = \frac{1}{1 + e^{-z_{ij}}}, \quad \text{and} \quad
    P(i \prec j \,|\, \{i, j\}) = \frac{1}{1 + e^{-z_{ji}}}.
\end{equation}
We can similarly extend the Rao-Kupper and Davidson models. The probabilities in \eqref{eq:rk} and \eqref{eq:davidson} can be expressed in terms of the logit function \(\log (\pi_i / \pi_j) = x_{ij}\), which represents the quantile of the logistic distribution. To incorporate the Thurstonian model, we replace \(x_{ij}\) with \(z_{ij}\). Thus, the Rao-Kupper model becomes
\begin{subequations} \label{eq:rk-th}
\begin{align}
    P(i \succ j \,|\, \{i, j\}) &= \frac{1}{1 + e^{-(z_{ij} - \eta_{ij})}}, \\
    P(i \prec j \,|\, \{i, j\}) &= \frac{1}{1 + e^{-(z_{ji} - \eta_{ji})}}, \\
    P(i \sim j \,|\, \{i, j\}) &=\frac{e^{2\eta_{ij}} - 1}{\left(1 + e^{-(z_{ij} - \eta_{ij})} \right) \left(1 + e^{-(z_{ji} - \eta_{ji})} \right)},
\end{align}
\end{subequations}
where in the above we also used \(\eta_{ij}\) to account for the generalized tie thresholds introduced in \Cref{sec:gen-tie}. Similarly, the Davidson model becomes
\begin{subequations} \label{eq:davidson-th}
\begin{align}
    P(i \succ j \,|\, \{i, j\}) &= \frac{1}{1 + e^{-z_{ij}} + e^{-(\frac{1}{2} z_{ij} - \eta_{ij})}}, \\
    P(i \prec j \,|\, \{i, j\}) &= \frac{1}{1 + e^{-z_{ji}} + e^{-(\frac{1}{2} z_{ji} - \eta_{ji})}}, \\
    P(i \sim j \,|\, \{i, j\}) &= \frac{1}{1 + e^{-(\frac{1}{2}z_{ij} + \eta_{ij})} + e^{-(\frac{1}{2} z_{ji} + \eta_{ji})}}.
\end{align}
\end{subequations}

These models introduce an additional \(m (m+1) / 2\) covariance parameters \(\sigma_{ij}\), which can lead to overparameterization. To address this, \cite{THURSTONE-1927} proposed various constraints on the covariance matrix, while \cite{TAKANE-1989} suggested a factor model for covariance structure. We adopt the factor model employed by \cite{BOCKENHOLT-1993,OLIVARES-2005}, where the covariance is constructed as
\begin{equation}
    \gtens{\Sigma} = \tens{D} + \gtens{\Lambda} \gtens{\Lambda}^{\intercal}, \label{eq:factor-cov}
\end{equation}
where \(\tens{D}=[d_{ij}]\) is a positive-definite diagonal matrix with \(d_{ii} > 0\), and \(\gtens{\Lambda} = [\lambda_{ij}]\) is an \(m \times k_{\mathrm{cov}}\) matrix of rank \(k_{\mathrm{cov}} \leq m\), containing the factor parameters \(\lambda_{ij}\). By selecting \(k_{\mathrm{cov}}\), we can balance model fit with complexity.

The covariance matrix \(\gtens{\Sigma}\) is inherently non-unique, with multiple covariance matrices corresponding to the same variance matrix \(\tens{S}\), forming an equivalence class. Additional details on the non-uniqueness, equivalence classes, and identifiability of covariance are provided in \Cref{sec:nonuqinie-cov}.

\subsection{Imposing Constraints to Resolve Symmetries} \label{sec:const}

When estimating the parameters of the models (such as \(\vect{\theta} = (\vect{x}, \tens{G})\) in the generalized tie models, or \(\vect{\theta} = (\vect{\mu}, \tens{G}, \tens{D}, \gtens{\Lambda})\) in their Thurstonian representation) through the maximum likelihood method described in \eqref{eq:likelihood}, we encounter computational issues due to the non-uniqueness of the solution. This arises from the fact that the likelihood function remains invariant under certain transformations of the parameters, known as symmetries. These symmetries can result in poor optimization behavior, such as large or small parameter values, causing instability in the estimation process. To address these issues, we propose constraints on the log-likelihood function to eliminate the problematic symmetries. While one of these symmetries has been addressed in prior work, the other two have not, to the best of our knowledge.

The first symmetry arises from the model’s invariance under the transformation \(x_i \mapsto x_i + c\), where \(c \in \mathbb{R}\) is a constant. This invariance implies that the values of \(x_i\) (or \(\mu_i\) in the Thurstonian representation) are not identifiable---only their differences are meaningful (see \Cref{sec:unidentifibility} for a detailed discussion on parameter identifiability). Consequently, the parameters can shift uniformly without affecting the model’s predictions. To address this symmetry, common approaches include fixing one of the score parameters, as demonstrated by \citet{CHIANG-2024}, or imposing other constraints on the parameter values. For example, the original Bradley-Terry model \citep{BRADLEY-1952} enforces the constraint \(\sum_{i=1}^m \pi_i = 1\). In our work, we adopt a similar approach by setting the mean of the scores: \(\sum_{i=1}^m x_i = 0\), which corresponds to \(\sum_{i=1}^m \mu_i = 0\) in the Thurstonian representation. This constraint effectively eliminates the shift invariance, ensuring stable optimization of the score parameters~\(\vect{x}\) (or \(\vect{\mu}\)).

The second symmetry, which has not been addressed in previous studies, involves scaling both the score and covariance parameters. Specifically, the Thurstonian models remain invariant under the transformation \((\mu_i, \sigma_{ij}) \mapsto (t \mu_i, t^2 \sigma_{ij})\) for any positive constant \(t\), because the ratio \(z_{ij}\) in \eqref{eq:zij} remains unchanged. This symmetry can lead to parameters collapsing to very small values, causing numerical instability or underflow during optimization. To resolve this, we introduce the following constraint
\begin{equation}
    \operatorname{trace} (\tilde{\gtens{\Sigma}}) = 1,
\end{equation}
where \(\tilde{\gtens{\Sigma}} \coloneqq \tens{P} \gtens{\Sigma} \tens{P}\), and \(\tens{P} \coloneqq \tens{I} - \frac{1}{m} \mathbf{1} \mathbf{1}^{\intercal}\) is the centering operator with \(\tens{I}\) as the identity matrix and \(\mathbf{1} \coloneqq (1, \dots, 1)^{\intercal}\) is a column vector of ones. This constraint ensures proper scaling of the covariance parameters and avoids collapse during optimization. A detailed explanation of this constraint is provided in \Cref{sec:const-cov}.

The third symmetry pertains to the factor model parameters \(\lambda_{ij}\). Using the factor model for covariance in \eqref{eq:factor-cov}, we can express the elements of the matrix \(\tens{S}\) as
\begin{equation*}
    s_{ii} = 0,
    \quad \text{and} \quad
    s_{ij} = d_{ii} + d_{jj} + \Vert \vect{\lambda}_{i} - \vect{\lambda}_{j} \Vert_2^2, \quad i \neq j,
\end{equation*}
where \(\vect{\lambda}_i \coloneqq (\lambda_{i1}, \dots, \lambda_{ik_{\mathrm{cov}}})\) is the \(i\)-th row of the matrix \(\gtens{\Lambda}\), and \(\Vert \cdot \Vert_2\) denotes the Euclidean norm. This expression reveals an invariance under translation \(\lambda_{ij} \mapsto \lambda_{ij} + c\), which has not been addressed in the literature. To eliminate this translation symmetry, we impose the constraint
\begin{equation}
    \Vert \gtens{\Lambda}^{\intercal} \mathbf{1} \Vert_2^2 = 0.
\end{equation}
This constraint fixes the column-wise mean of \(\gtens{\Lambda}\) to zero, preventing the factor parameters from arbitrarily shifting.

\section{Empirical Evaluation of Statistical Models} \label{sec:results}

In this section, we evaluate the statistical models introduced earlier using the Chatbot Arena dataset. As of September 2024, the dataset comprises \(m = 129\) competitors, with \(|E| = 3455\) unique pairs. The total number of comparisons across all pairs is \(\sum_{\{i, j\} \in E} n_{ij} = 1,374,996\), distributed as follows: \(43.3\%\) wins, \(36.2\%\) losses, and \(20.4\%\) ties.

We analyzed 30 configurations of the Bradley-Terry, Rao-Kupper, and Davidson models, detailed in \Cref{tab:model-selection} in \Cref{sec:experimental-setup}. These configurations include both the original forms of the models and various generalizations introduced in this work, each assigned a unique ID corresponding to their rows in the table (e.g., Model 1, Model 2, etc.), which we reference throughout this and subsequent sections. For example, Model 1 corresponds to the Bradley-Terry model with ties treated as half a win and half a loss, following \citet{CHIANG-2024}, while Model 4 represents the original Bradley-Terry model without ties. Similarly, Models 7 and 19 correspond to the original Rao-Kupper and Davidson models, respectively, with the remaining configurations representing our proposed generalizations.

Details of the experimental setup, along with results on model selection, goodness of fit, and generalization performance, are provided in \Cref{sec:evaluations}. Here, we focus on visual comparisons of pairwise win/loss and tie matrices to evaluate how well the models replicate observed outcomes across competitor pairs and to highlight their predictive capabilities.

\subsection{Evaluating the Prediction of Win/Loss and Tie Matrices} \label{sec:match-matrices}

\begin{figure}[t!]
    \centering
    \includegraphics[width=\textwidth]{\figdir/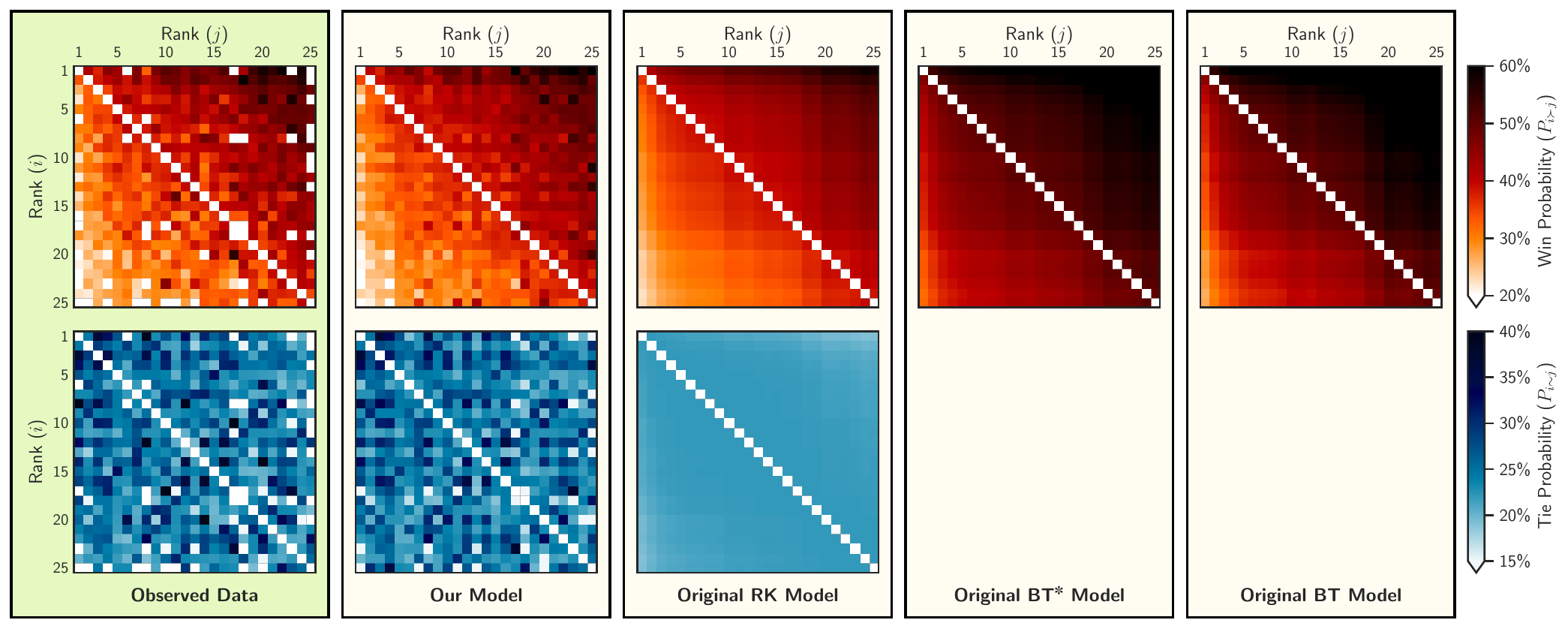}
    \caption{Comparison of pair-specific win (first row) and tie (second row) probabilities among 25 competitors between observed data (first column) and model predictions: our generalized Rao-Kupper with factored tie model (second column), original Rao-Kupper (RK) with ties (third column), Bradley-Terry with ties as half win/loss (fourth column, \citep{CHIANG-2024}), and original Bradley-Terry without ties (fifth column). The ordinate and abscissa are shared across panels, shown only for the left-most and top-most panels. Each row shares the same color range and colorbar.}
    \label{fig:match-matrix}
\end{figure}

\Cref{fig:match-matrix} visualizes a subset of the win/loss matrix \(\tens{W}\) and tie matrix \(\tens{T}\) for the top 25 competitors predicted by four models. The first row of the figure shows win probabilities, while the second row shows tie probabilities. The leftmost column contains the observed matrices, while the subsequent columns represent predictions from: the generalized Rao-Kupper model with factored ties (second column, corresponding to Model 18), the original Rao-Kupper (RK) model with a single tie parameter (third column, corresponding to Model 7), the Bradley-Terry model with ties treated as half win and half loss \citep{CHIANG-2024} (fourth column, corresponding to Model 1), and the original Bradley-Terry model without ties (fifth column, corresponding to Model 4). Note that the Bradley-Terry models do not account for ties, so their tie probability matrices are absent. For consistency, the 25 competitors are ordered identically across all panels based on the ranking from Model 18.

The generalized Rao-Kupper model with factored ties (second column) exhibits fine-grained accuracy, closely matching the observed matrices for both win/loss and tie outcomes. In contrast, the original Rao-Kupper model (third column) predicts the win matrix reasonably well but shows reduced accuracy for ties. The Bradley-Terry models (fourth and fifth columns, with ties treated as half win/loss and without ties, respectively) differ significantly in their win/loss matrices, as they do not account for ties, resulting in noticeable discrepancies compared to the observed data.

To complement the pairwise probability comparisons presented here, \Cref{sec:marginal} evaluates the models using marginal probabilities of win, loss, and tie aggregated across all competitors.

\section{Ranking and Comparison of LLM-Based Chatbots} \label{sec:rank-vis}

In this section, we demonstrate how the proposed models provide insights into the competitive performance of chatbots. \Cref{sec:ranking} focuses on ranking analysis, while \Cref{sec:cor} explores relationships between chatbots, building on the covariance structures introduced in \Cref{sec:cov}.

\subsection{Ranking} \label{sec:ranking}

In pairwise comparison methods, the score parameters \(\vect{x}\) (or \(\vect{\mu}\) in the Thurstonian representation) rank competitors, offering a straightforward interpretation of performance, with higher scores indicating stronger competitors. For example, \Cref{fig:scores} in \Cref{sec:score-plot} shows the scores of the top 50 chatbots, ranked by Model 18 in \Cref{tab:model-selection} (generalized Rao-Kupper model with \(k_{\mathrm{cov}} = 3\) and \(k_{\mathrm{tie}} = 20\)).

A key question is how different statistical models influence these rankings. To address this, we analyzed rankings produced by 12 selected models from \Cref{tab:model-selection}. \Cref{fig:bump_chart} in \Cref{sec:bump-chart} visualizes the bump chart of these rankings across models of varying complexity. Each column represents a model, ordered from simpler models on the right to more complex ones on the left, and each row tracks a chatbot's rank across models.

Notably, top-ranked chatbots exhibit stable rankings across all models, indicating reliability regardless of model complexity. In contrast, lower-ranked chatbots display greater variation, reflecting their sensitivity to model selection.

To systematically analyze these differences, we computed the Kendall rank correlation (\(\tau\)) between all 30 models in \Cref{tab:model-selection}, as shown in \Cref{fig:kendall} in \Cref{sec:kendall-tau}. Kendall’s \(\tau\) measures the agreement between two rankings by evaluating the relative ordering of pairs, making it particularly well-suited for ordinal data. This analysis revealed that rankings across models are highly correlated, but notable differences emerge based on model parameterizations.

To better interpret these differences, we applied hierarchical clustering to the Kendall correlation matrix, as detailed in \Cref{sec:clustering}. The resulting ordering of models, displayed in \Cref{fig:kendall}, is determined directly by the optimal ordering suggested by the clustering algorithm, reflecting the structure it uncovered.

Interestingly, the inclusion or exclusion of the Thurstonian covariance factor emerges as the primary driver of these clusters: models without covariance (indicated by \no) form a distinct cluster, separate from those with covariance. Furthermore, within the covariance group, models are divided into two subgroups based on the complexity of their covariance structures: \(k_{\mathrm{cov}} = 0\) and \(k_{\mathrm{cov}} = 3\). This highlights the strong influence of covariance parameters on ranking similarities.

While covariance prominently drives ranking---more so than other model features such as tie parameters---earlier results (see \Cref{sec:results,sec:evaluations}) showed that generalized tie modeling significantly enhances model fit and predictive accuracy. These findings highlight complementary strengths: tie modeling improves fit, while covariance modeling shapes ranking patterns and distinguishes competitors based on shared performance characteristics.

\subsection{Exploring Competitor Correlations} \label{sec:cor}

Incorporating covariance into Thurstonian models allows for exploring relationships between competitors beyond simple rankings. While rankings provide ordinal insights, covariance structures quantify uncertainties and correlations, uncovering patterns that rankings alone cannot capture.

As detailed in \Cref{sec:nonuqinie-cov}, the covariance matrix \(\gtens{\Sigma}\) is not unique, but its associated matrix \(\tens{S} = [s_{ij}]\), representing the variance of score differences, is unique and interpretable. This matrix serves as a dissimilarity measure, capturing the relative uncertainty in performance comparisons. By normalizing score differences with \(\sqrt{s_{ij}}\), we construct the matrix \(\tens{Z} = [z_{ij}]\) as given in \eqref{eq:zij}, which forms the basis for analyzing relationships among competitors.

To visualize these relationships, we apply kernel PCA to \(\tens{Z}\) and project the data into three dimensions, as shown in \Cref{fig:kpca}. Each point represents a competitor, with distances reflecting their dissimilarities. We use a squared exponential kernel, \(\rho_{ij} = \exp(-\gamma z_{ij}^2)\), with \(\gamma = 10^{-4}\), to improve interpretability in feature space. This spatial representation highlights groupings of competitors with similar performance patterns.

\begin{figure}[t!]
    \centering
    \includegraphics[width=0.77\textwidth]{\figdir/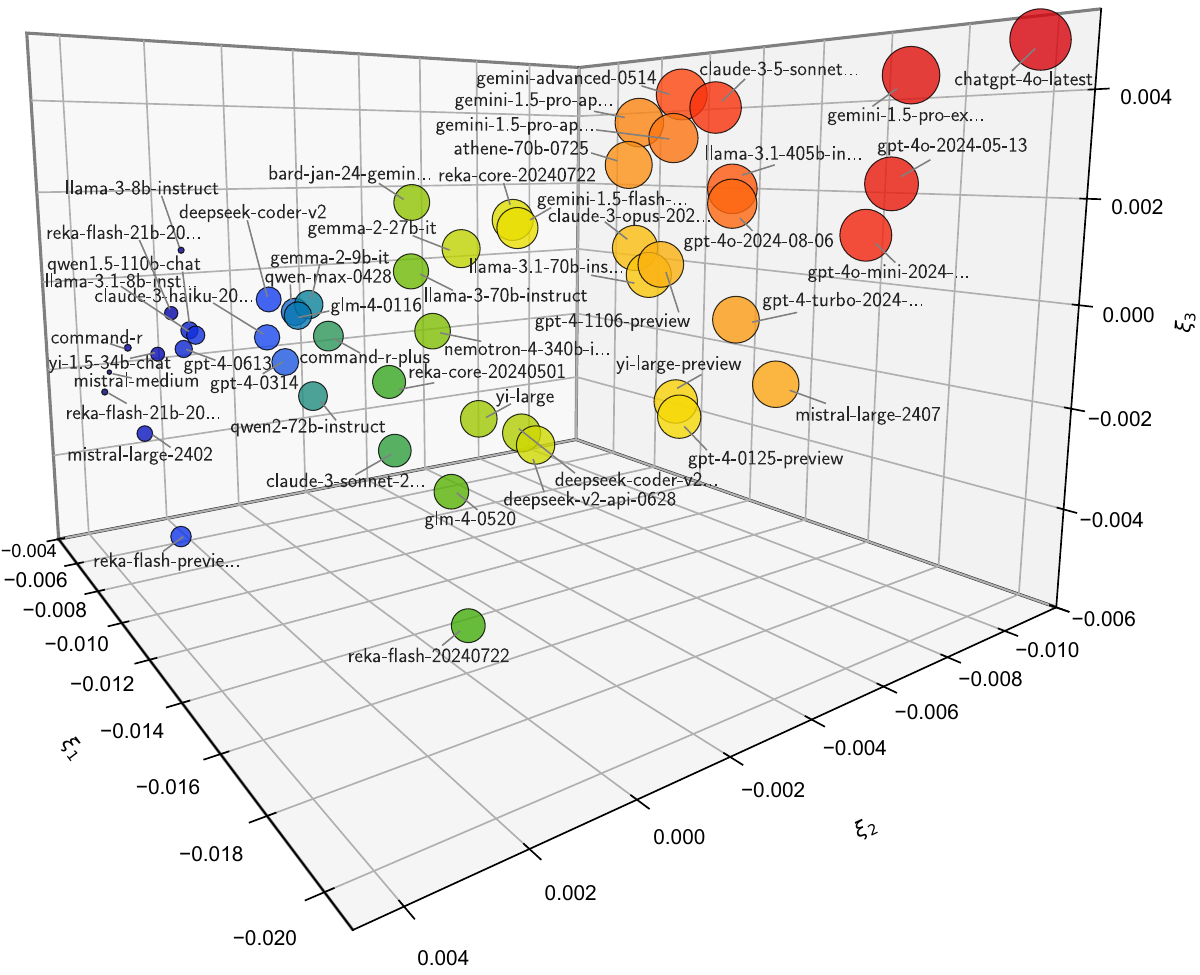}
    \caption{Kernel PCA projection of the distance matrix \(\tens{Z}\) onto three dimensions, focusing on the top 40 competitors ranked by Model 18 of \Cref{tab:model-selection}. Circle size and color are proportional to the competitors' scores.}
    \label{fig:kpca}
\end{figure}

Complementing the kernel PCA visualization in \Cref{fig:kpca}, \Cref{sec:interpret-cov} employs multidimensional scaling (MDS) as an alternative approach for interpreting the dissimilarity matrix \(\tens{Z}\). MDS directly translates pairwise dissimilarities into spatial distances, preserving relational structures in a reduced-dimensional space. Both KPCA and MDS uncover consistent spatial grouping patterns that align closely with rankings.

To systematically capture these spatial patterns, we applied hierarchical clustering to \(\tens{Z}\), as detailed in \Cref{sec:tier}. The clustering analysis organizes competitors into performance-based tiers. For example, top performers such as ChatGPT-4o Latest and Gemini 1.5 Experimental form distinct clusters, reflecting their dominance, while mid- and lower-ranked competitors stratify into meaningful subgroups. Notably, the optimal ordering in hierarchical clustering effectively retrieves the ranking order, even without direct access to individual scores. These results emphasize the value of covariance in analyzing relational patterns.

\section{Conclusion} \label{sec:conclusion}

Evaluating large language models (LLMs) through pairwise comparisons is essential for assessing their capabilities and limitations in practical applications. This paper presents a statistical framework that extends traditional methods by incorporating tie modeling, covariance structures, and constraints to address optimization challenges, enhancing interpretability, stability, and accuracy in LLM evaluations.

Our generalized tie model addresses long-standing limitations of Rao-Kupper and Davidson methods, reducing prediction errors by two orders of magnitude and improving fit across win, loss, and tie outcomes. By introducing covariance structures, our framework uncovers latent relationships among competitors, enabling clustering into performance tiers and enriching interpretability beyond rankings. Additionally, we resolve optimization challenges through novel constraints, ensuring stable and unique parameter estimation.

A key insight from our analysis is that covariance structures not only enrich interpretability but also shape ranking consistency, underscoring their critical role in capturing nuanced relationships in pairwise comparisons. This finding highlights the broader utility of covariance-based approaches in complex evaluation tasks.

To validate our framework, we conducted rigorous empirical evaluations, demonstrating significant improvements in model fit and interpretability compared to existing methods. To support reproducibility and broader adoption, we provide \texttt{leaderbot}, an open-source Python package implementing our statistical framework with tools for data processing, model fitting, and visualization.

Our work not only advances LLM evaluation but also provides a robust foundation for analyzing pairwise comparison data in diverse domains. By bridging theoretical rigor with practical applicability, this framework opens new avenues for ranking, inference, and interpretability in complex comparison~tasks.

\newcommand{\acknowledgmenttext}
{
    The authors thank Anastasios Nikolas Angelopoulos and Tianle Li for their thoughtful feedback. MWM acknowledges support from the DOE, DARPA, NSF, and ONR.
}

\vspace{0.3cm}
\ifjournal
    \subsubsection*{Acknowledgments}
    \acknowledgmenttext
\else
    \noindent\textbf{Acknowledgments.}
    \acknowledgmenttext
\fi

% \FloatBarrier
\bibliographystyle{apalike2}
% fontsize 9 (using \small) for bibliography:
{
\small
\bibliography{references}
}

\newpage

\appendix

\begin{appendices} \label{sec:app}

\addtocontents{toc}{\protect\setcounter{tocdepth}{2}}  % Set ToC depth for appendices
\tableofcontents

% \section{Related Works} \label{sec:related}

% Connection with PageRank: \cite{SELBY-2024}, \cite{MAHONEY-2016}, \cite{VIGNA-2019}, \cite{GLEICH-2014}

% \cite{NEGAHBAN-2012}

% \cite{CATTELAN-2012}

% \cite{Gonzalez-2014}

% Other methods: \cite{GLICKMAN-1995}

% Other probabilistic rating systems, besides the Elo method, include Glicko and TrueSkill\texttrademark by \cite{HERBIRCH-2006} (see also \cite[p. 654]{MURPHY-2012}), used in Microsoft Xbox, which are best known for their use in multiplayer games.

% Arena: \cite{LI-2024c}, \cite{LUO-2024}

\section{Broader Implications of Paired Comparison Methods} \label{appendix:applications}

Paired comparison frameworks are foundational in many fields and have been widely applied in classical and modern domains. In sports analytics, they are used to predict match outcomes and assess player performance in games such as chess \citep{ELO-1978}, tennis \citep{GLICKMAN-1995}, and soccer. Marketing applications include optimizing product offerings, advertisement placements, and analyzing consumer preferences. In psychometrics and behavioral studies, paired comparisons assess perception and attitudes in response to visual or auditory stimuli. Similarly, in election studies and political science, they are employed to rank candidates, analyze voting behavior, test referendum arguments \citep{LOEWEN-2012}, and measure perceived political ideologies \citep{HOPKINS-2022}. Clinical research also uses paired comparisons to evaluate treatments and interventions in clinical trials and epidemiological studies. These diverse applications illustrate the versatility of paired comparison frameworks in extracting meaningful inferences from comparative data.

Recent advancements have extended paired comparison methods to machine learning, where they play a pivotal role in preference modeling and optimization. For instance, Reinforcement Learning with Human Feedback (RLHF) uses paired comparisons to fine-tune large language models by ranking outputs based on human preferences, often employing the Bradley-Terry model for preference quantification \citep{RAFAILOV-2024, KARTHIK-2024}. Direct Preference Optimization (DPO) further refines this approach by aligning model outputs directly with human preferences without relying on scalar reward models \citep{WU-2023}. Additionally, methodologies like Pairwise Proximal Policy Optimization (P3O) leverage relative feedback to enhance LLM alignment \citep{WU-2024}. Innovations such as the integration of Rao-Kupper models have enabled paired comparison frameworks to incorporate ties, capturing ambiguous or neutral preferences in RLHF settings \citep{LIU-2024}. These developments highlight the growing influence of paired comparison methods in machine learning and underscore the potential of our generalizations to enhance these frameworks~further.

\section{Unidentifiability of Parameters in Paired Comparison Models} \label{sec:unidentifibility}

In this section, we discuss the challenge of estimating the uncertainties of scores \(x_i\) in paired comparison models. Previous works, such as \citet{CHIANG-2024}, have introduced methods for computing confidence intervals for scores using empirical approaches like bootstrapping. While these methods can be valuable in practice, we demonstrate that this problem is intrinsically ill-posed due to the unidentifiability of the score parameters. Specifically, the likelihood function depends only on score differences \(x_i - x_j\), rendering individual score estimates invariant under shift transformation. This invariance introduces non-uniqueness in the quantification of confidence intervals.

This issue arises not from specific modeling choices but from a structural characteristic of models based on linear stochastic transitivity, where probabilities take the form \(F(x_i - x_j)\) (see \Cref{sec:models}). To analyze this limitation rigorously, we examine the Fisher Information Matrix (FIM) of the likelihood function. In \Cref{sec:fisher}, we provide a background on the FIM and its role in parameter identifiability. In \Cref{sec:singular}, we show that the score parameters are unidentifiable due to the singularity of the FIM. Finally, in \Cref{sec:identifiable}, we propose reparameterizations that result in identifiable quantities.

\subsection{Fisher Information and Identifiability: Background} \label{sec:fisher}

The Fisher Information Matrix (FIM) quantifies the amount of information that the likelihood function carries about the parameters of interest. It can be derived from the gradient of the log-likelihood function, known as the informant vector,\footnote{Commonly referred to as the \emph{score} but this term is avoided here to prevent confusion with the score parameters~\(x_i\).} or equivalently, from the negative Hessian matrix of the log-likelihood under mild regularity conditions:
\begin{equation}
    \tens{F}(\vect{\theta})
    \coloneqq
    \mathbb{E} \left[ \nabla_{\vect{\theta}} \ell(\vect{\theta}) \otimes \nabla_{\vect{\theta}} \ell(\vect{\theta}) \,|\, \vect{\theta} \right] =
    - \mathbb{E} \left[ \nabla_{\vect{\theta}} \nabla_{\vect{\theta}}^{\intercal} \ell(\vect{\theta}) \,|\, \vect{\theta} \right].
\end{equation}

The FIM measures the curvature of the likelihood function around the estimated parameters, reflecting the precision of the parameter estimates. A sharper likelihood function implies higher confidence in the parameter estimates. Formally, the Cram\'{e}r-Rao bound establishes a theoretical lower bound for the covariance of the parameter estimates (see \eg \cite{SODERSTROM-1989}):
\begin{equation}
    \operatorname{cov}(\vect{\theta}) \geq \tens{F}^{-1}(\vect{\theta}). \label{eq:cramer}
\end{equation}
This lower bound is often used to derive estimates of parameter uncertainty. For example, assuming the approximation \(\operatorname{var}(\theta_i) \approx [\tens{F}^{-1}]_{ii}\), the confidence interval for \(\theta_i\) can be estimated as
\begin{equation}
    \Delta \theta_i = t_{\alpha, n-m} \sqrt{\operatorname{var}(\theta_i)}, \label{eq:var_xi}
\end{equation}
where \(t_{\alpha, n-m}\) is the critical value from the Student’s \(t\)-distribution for a confidence level \(\alpha \in [0, 1]\) with \(n-m\) degrees of freedom, \(n\) being the number of data points and \(m\) the number of parameters. This approach yields a conservative estimate of the variance of \(\theta_i\), reflecting the Cram\'{e}r-Rao bound. Alternative methods, such as bootstrapping, may provide more practical confidence intervals in certain cases.

When the FIM is ill-conditioned or singular, however, the parameter estimation problem becomes ill-posed. In such cases, the uncertainty bounds \(\Delta \theta_i\) become unbounded or undefined. This occurs when the likelihood function exhibits invariance under certain transformations of the parameters, leading to parameter redundancy. We formally define parameter identifiability and its connection to the FIM below.

\newcommand{\thethmref}[2]{\cite[#1]{#2}}
\begin{definition}[\thethmref{Definitions 1, 2, and 3}{ROTHENBERG-1971}] \label{def:identifiable}
    Two parameter vectors \(\vect{\theta}\) and \(\vect{\theta}'\) are said to be \emph{observationally equivalent} if \(\ell(\vect{\theta}) = \ell(\vect{\theta}')\). A parameter vector \(\vect{\theta}\) is \emph{locally identifiable} if there exists an open neighborhood around \(\vect{\theta}\) containing no other \(\vect{\theta}'\) that is observationally equivalent to \(\vect{\theta}\). If \(\vect{\theta}\) is not observationally equivalent to any other parameter vector in the entire domain of the likelihood function, it is said to be \emph{globally identifiable}.
\end{definition}

\begin{theorem}[\thethmref{Theorem 1}{ROTHENBERG-1971}]
    Suppose the Fisher Information Matrix \(\tens{F}\) is a continuous function of \(\vect{\theta}\). Let \(\vect{\theta}^{\ast}\) be a regular point of \(\tens{F}\), meaning \(\tens{F}(\vect{\theta})\) has constant rank in a neighborhood of \(\vect{\theta}^{\ast}\). Then, \(\vect{\theta}^{\ast}\) is locally identifiable if and only if \(\tens{F}(\vect{\theta}^{\ast})\) is non-singular.
\end{theorem}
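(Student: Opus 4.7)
The result is Rothenberg's classical theorem, and the natural approach is to prove the two implications separately, using the FIM's role as the Hessian of the expected log-likelihood (equivalently, up to sign, the local quadratic form of the Kullback--Leibler divergence). Throughout, I will work with the expected log-likelihood $\bar{\ell}(\vect{\theta}) \coloneqq \mathbb{E}_{\vect{\theta}^{\ast}}[\ell(\vect{\theta})]$, which is maximized at $\vect{\theta}^{\ast}$ with $\nabla_{\vect{\theta}} \bar{\ell}(\vect{\theta}^{\ast}) = 0$ and Hessian $-\tens{F}(\vect{\theta}^{\ast})$, so that the null space of $\tens{F}(\vect{\theta}^{\ast})$ is precisely the set of directions along which the model is first-order flat in distribution.

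\textbf{Sufficiency (non-singular FIM $\Rightarrow$ local identifiability).} This is the easy direction. Assuming $\tens{F}(\vect{\theta}^{\ast})$ is non-singular, hence positive definite (it is always positive semidefinite as a covariance), a second-order Taylor expansion yields
\begin{equation*}
    \bar{\ell}(\vect{\theta}^{\ast}) - \bar{\ell}(\vect{\theta}) = \tfrac{1}{2}(\vect{\theta} - \vect{\theta}^{\ast})^{\intercal} \tens{F}(\vect{\theta}^{\ast}) (\vect{\theta} - \vect{\theta}^{\ast}) + o(\Vert \vect{\theta} - \vect{\theta}^{\ast} \Vert^2),
\end{equation*}
and the right-hand side is strictly positive on a punctured neighborhood. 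Since $\bar{\ell}(\vect{\theta}^{\ast}) - \bar{\ell}(\vect{\theta})$ equals the KL divergence between the models at $\vect{\theta}^{\ast}$ and $\vect{\theta}$, any $\vect{\theta}' \neq \vect{\theta}^{\ast}$ in that neighborhood induces a strictly different distribution and therefore cannot be observationally equivalent to $\vect{\theta}^{\ast}$ in the sense of \Cref{def:identifiable}.

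\textbf{Necessity (local identifiability $\Rightarrow$ non-singular FIM).} This is the direction where the constant-rank hypothesis is essential, and it is the main technical obstacle. The plan is to argue the contrapositive: assume $\tens{F}(\vect{\theta}^{\ast})$ is singular and construct a smooth curve of observationally equivalent parameters through $\vect{\theta}^{\ast}$. Concretely, since $\tens{F}$ has constant rank $r < \dim \vect{\theta}$ on a neighborhood $U$ of $\vect{\theta}^{\ast}$, the assignment $\vect{\theta} \mapsto \ker \tens{F}(\vect{\theta})$ is a smooth distribution of constant dimension $\dim \vect{\theta} - r$ on $U$. Appealing to Frobenius' theorem (or the constant-rank theorem applied to the score map $\vect{\theta} \mapsto \nabla_{\vect{\theta}} \ell(\vect{\theta}; \cdot)$ viewed as a function into the $L^2$ space of square-integrable random variables), the null directions integrate to a smooth submanifold $\mathcal{M} \subset U$ of positive dimension passing through $\vect{\theta}^{\ast}$. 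Along any smooth curve in $\mathcal{M}$, the directional derivative of $\ell$ vanishes identically (almost surely over the data, up to a null set), so the likelihood function is constant on $\mathcal{M}$, producing a continuum of observationally equivalent parameters in every neighborhood of $\vect{\theta}^{\ast}$ and contradicting local identifiability.

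\textbf{Main obstacle.} The delicate step is rigorously promoting the pointwise null space of $\tens{F}(\vect{\theta}^{\ast})$ to an integrable distribution on $U$, and then transferring constancy of $\bar{\ell}$ along leaves of this distribution to constancy of the likelihood $\mathcal{L}(\vect{\theta}; \cdot)$ as a function on the data space. This is exactly where the constant-rank hypothesis is used: without it, the null space can jump in dimension and no such integral manifold exists, which is why the theorem can fail at non-regular points (as noted by Rothenberg). For our paired-comparison application in \Cref{sec:const}, the symmetries $x_i \mapsto x_i + c$, $(\mu_i, \sigma_{ij}) \mapsto (t\mu_i, t^2 \sigma_{ij})$, and $\lambda_{ij} \mapsto \lambda_{ij} + c$ each provide explicit one-parameter families of observationally equivalent parameters, making the singularity of $\tens{F}$ visible and motivating the constraints we impose.
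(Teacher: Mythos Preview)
The paper does not prove this theorem; it is quoted as \citet[Theorem 1]{ROTHENBERG-1971} in \Cref{sec:fisher} purely as background, with no proof supplied. There is therefore no paper-side argument to compare your proposal against.

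That said, your sketch is a faithful outline of Rothenberg's own argument. The sufficiency direction is standard and correct. For necessity, Rothenberg's original proof is slightly more elementary than your Frobenius formulation: he selects a continuous unit null vector \(\vect{v}(\vect{\theta}) \in \ker \tens{F}(\vect{\theta})\) (possible precisely because of the constant-rank hypothesis), integrates the ODE \(\dot{\vect{\theta}}(t) = \vect{v}(\vect{\theta}(t))\) through \(\vect{\theta}^{\ast}\), and then observes that along this curve \(\vect{v}^{\intercal} \nabla_{\vect{\theta}} \ell = 0\) almost surely in the data, since this random variable has mean zero and variance \(\vect{v}^{\intercal} \tens{F} \vect{v} = 0\). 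Hence the likelihood itself---not merely its expectation \(\bar{\ell}\)---is constant on the curve, giving a continuum of observationally equivalent points. Your appeal to Frobenius is not incorrect, but a single integral curve already suffices, so the involutivity machinery is more than needed. The one step you should make explicit is this passage from ``\(\tens{F}\) is singular'' to ``\(\ell\) (not just \(\bar{\ell}\)) is flat along the curve''; your final paragraph gestures at it but does not quite spell out the zero-variance argument that makes it work.
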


The above theorem establishes that the FIM plays a central role in determining parameter identifiability (see also \cite[Sections 3.4 and 8.4]{SEBER-2005}).

\subsection{Unidentifiability of Score Parameters} \label{sec:singular}

We now focus on the identifiability of the score parameters, \(\vect{x}\), in paired comparison models. For simplicity, we limit the analysis to \(\vect{x}\), though the results extend naturally to other parameters. We prove that the FIM for \(\vect{x}\) is singular for likelihood functions satisfying the shift invariance property. While the invariance property trivially implies unidentifiability by definition, analyzing the FIM reveals deeper insights into the parameter space. Specifically, it identifies the null space causing unidentifiability and highlights subspaces suitable for well-defined reparametrizations, as explored in the next subsection.

\begin{proposition} \label{prop:singular}
    Let the log-likelihood function \(\ell \in C^2(\mathbb{R}^m, \mathbb{R})\) satisfy the shift invariance property
    \begin{equation}
        \ell(\vect{x} + c \mathbf{1}) = \ell(\vect{x}), \quad c \in \mathbb{R}. \label{eq:shift-invariance}
    \end{equation}
    Then, the corresponding Fisher Information Matrix \(\tens{F}(\vect{x})\) is singular where \(\operatorname{rank}(\tens{F}(\vect{x})) \leq m-1\), with \(\mathbf{1}\) (the vector of ones) in its null space.
\end{proposition}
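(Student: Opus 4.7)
The overall plan is to show that the shift invariance of $\ell$ forces the informant (gradient of $\ell$) to be orthogonal to $\mathbf{1}$ pointwise in the data, and then to translate this orthogonality directly into the claim that $\mathbf{1}$ lies in the null space of $\tens{F}(\vect{x})$. Since producing a single nonzero vector in $\ker \tens{F}(\vect{x})$ suffices to establish singularity and the rank bound, no further spectral analysis is needed.

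The concrete steps I would carry out are as follows. First, I would differentiate the identity $\ell(\vect{x} + c\mathbf{1}) = \ell(\vect{x})$ with respect to $c$, using $\ell \in C^2(\mathbb{R}^m,\mathbb{R})$ to justify the chain rule. Evaluating at $c=0$ yields the pointwise identity
\begin{equation*}
    \mathbf{1}^{\intercal}\, \nabla_{\vect{x}} \ell(\vect{x}) = 0 \qquad \text{for all } \vect{x} \in \mathbb{R}^m.
\end{equation*}
Second, I would plug this into the outer-product form of the FIM from the excerpt, namely $\tens{F}(\vect{x}) = \mathbb{E}\bigl[ \nabla_{\vect{x}}\ell \otimes \nabla_{\vect{x}}\ell \,\big|\, \vect{x}\bigr]$. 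Right-multiplying by $\mathbf{1}$ and pulling $\mathbf{1}$ inside the expectation gives
\begin{equation*}
    \tens{F}(\vect{x})\, \mathbf{1}
    = \mathbb{E}\!\left[ \nabla_{\vect{x}} \ell \,\bigl( \mathbf{1}^{\intercal} \nabla_{\vect{x}} \ell \bigr) \,\Big|\, \vect{x}\right]
    = \mathbb{E}[\,\mathbf{0} \,|\, \vect{x}\,] = \mathbf{0},
\end{equation*}
so $\mathbf{1} \in \ker \tens{F}(\vect{x})$. As a sanity check, I would also verify consistency via the Hessian representation: differentiating $\mathbf{1}^{\intercal}\nabla \ell(\vect{x}) = 0$ once more in $\vect{x}$ produces $(\nabla^2 \ell(\vect{x}))\mathbf{1} = \mathbf{0}$ pointwise, and taking $-\mathbb{E}[\cdot]$ recovers the same conclusion. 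Finally, since $\mathbf{1} \neq \mathbf{0}$ sits in $\ker \tens{F}(\vect{x})$, the rank-nullity theorem gives $\operatorname{rank}(\tens{F}(\vect{x})) \leq m-1$, which proves the claim.

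There is no serious obstacle here; the argument is essentially a one-line differentiation followed by a linear-algebraic observation. The only technical care needed is to invoke the $C^2$ regularity assumption to legitimize interchanging differentiation with the pointwise evaluation at $c=0$, and to ensure that the expectation defining $\tens{F}$ commutes with the contraction against the deterministic vector $\mathbf{1}$ (which is immediate by linearity). The pointwise-in-data nature of the shift invariance is what makes the argument clean: the identity $\mathbf{1}^{\intercal}\nabla\ell = 0$ holds before taking expectation, so no subtle cancellation or integration-by-parts argument is required.
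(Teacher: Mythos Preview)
Your proposal is correct. Both you and the paper start by differentiating the shift invariance to obtain $\mathbf{1}^{\intercal}\nabla_{\vect{x}}\ell(\vect{x})=0$; the only difference is which representation of $\tens{F}$ is then used. You lead with the informant outer-product form $\tens{F}=\mathbb{E}[\nabla\ell\otimes\nabla\ell]$ and contract against $\mathbf{1}$ directly, whereas the paper differentiates once more in $x_i$ to show the Hessian has zero row sums, i.e.\ $\tens{H}\mathbf{1}=\mathbf{0}$, and then passes to $\tens{F}=-\mathbb{E}[\tens{H}]$. Your ``sanity check'' via the Hessian is exactly the paper's argument, so the two proofs coincide up to which of the two equivalent FIM formulas is invoked first; your outer-product route is arguably a touch more direct since it avoids the second differentiation.
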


\begin{proof}
    From \eqref{eq:shift-invariance} we have
    \begin{equation}
        \frac{\partial \ell(\vect{x} + c \mathbf{1})}{\partial c} = \sum_{j=1}^m \frac{\partial \ell(\vect{x})}{\partial x_j} = 0. \label{eq:sum-informant}
    \end{equation}
    On the other hand, summing over all columns of the Hessian, \(\tens{H} \coloneqq \nabla_{\vect{x}} \nabla_{\vect{x}}^{\intercal} \ell(\vect{x})\), and using \eqref{eq:sum-informant} yields
    \begin{equation}
        \sum_{j=1}^m H_{ij} = \sum_{j=1}^m \frac{\partial^2 \ell(\vect{x})}{\partial x_i \partial x_j} = \frac{\partial}{\partial x_i} \left( \sum_{j=1}^m \frac{\partial \ell(\vect{x})}{\partial x_j} \right) = 0, \qquad \forall \, i = 1, \dots, m.
    \end{equation}
    Hence, \(\tens{H}\) has a zero row sum, implying \(\tens{H}\mathbf{1} = \vect{0}\). Therefore, \(\mathbf{1}\) lies in the null space of \(\tens{H}\), and by extension, \(\tens{F}(\vect{x})\) is singular with a rank of at most \(m-1\).
\end{proof}

The singularity of \(\tens{F}(\vect{x})\) confirms the unidentifiability of the score parameters \(\vect{x}\), rendering the quantification of their uncertainties fundamentally ill-posed. As a result, the lower bounds from the Cram\'{e}r-Rao inequality in \eqref{eq:cramer} become unbounded, making the confidence interval such as in \eqref{eq:var_xi} undefined. In the next section, we analyze the structure of \(\tens{F}(\vect{x})\) to identify subspaces where meaningful parameter estimation is possible.

\subsection{Identifiable Parametrization} \label{sec:identifiable}

We now address which quantities are identifiable through the FIM. Specifically, any reparameterization within the range of the FIM is identifiable. Let \(\mathcal{N}_{\vect{\theta}}\) denote the null space of the FIM and \(\mathcal{N}^{\perp}_{\vect{\theta}}\) its orthogonal complement. Suppose \(\vect{\theta} = \vect{\theta}^{\ast}\) is a local minima of the likelihood function. The FIM, when restricted to \(\mathcal{N}^{\perp}_{\vect{\theta}^{\ast}}\), is positive definite, and any parameterization within this subspace is identifiable.

In the case of pairwise comparison with the optimal solution \(\vect{x} = \vect{x}^{\ast}\), assuming \(\operatorname{rank}(\tens{F}(\vect{x}^{\ast})) = m-1\), we have \(\mathcal{N}_{\vect{x}^{\ast}} \coloneqq \operatorname{span}(\mathbf{1})\). The projection operator onto \(\mathcal{N}^{\perp}_{\vect{x}^{\ast}}\) is given by
\begin{equation}
    \tens{P} = \tens{I} - \frac{1}{m} \mathbf{1} \mathbf{1}^{\intercal}, \label{eq:P}
\end{equation}
which is the centering matrix that converts \(\vect{x}^{\ast}\) to the mean-zero vector \(\tilde{\vect{x}}^{\ast} \coloneqq \tens{P} \vect{x}^{\ast} \in \mathcal{N}^{\perp}_{\vect{x}^{\ast}}\). A representation of this reparameterization is provided by the \( (m-1) \times m \) \emph{forward differencing matrix} \(\mathbf{A}: \mathbb{R}^{m} \to \mathcal{N}^{\perp}_{\vect{x}^{\ast}}\), with entries \(A_{i,i} = 1\), \(A_{i, i+1} = -1\), and zero otherwise. Using \(\mathbf{A}\), \(\vect{x}^*\) can be transformed into \(\vect{y}^* \coloneqq \mathbf{A}\vect{x}^*\), where each component \(y_i^* = x_i^* - x_{i+1}^*\) represents the difference between adjacent elements of \(\vect{x}^*\). This reparameterization lies entirely in \(\mathcal{N}^{\perp}_{\vect{x}^*}\), making \(\vect{y}^*\) identifiable and enabling meaningful quantification of its uncertainty.

Thus, in paired comparison models we consider, only differences in scores provide meaningful inference. In the next section, we explore this in the context of Thurstonian covariance parameters.

\section{Covariance Model} \label{sec:cov-details}

In \Cref{sec:cov} we expand the inclusion of covariance via Thurstonian model. We recall that, in Thurstonian model, the score parameters are assumed to be stochastic with \(x_i = \mu_i + \epsilon_i\) where \(\epsilon_i\) is the stochastic component with the covariance \(\gtens{\Sigma} = [\sigma_{ij}]\) where \(\sigma_{ij} \coloneqq \operatorname{cov}(\epsilon_i, \epsilon_j)\). Furthermore, we defined the matrix \(\tens{S} = [s_{ij}]\) where \(s_{ij} = \sigma_{ii} + \sigma_{jj} - 2 \sigma_{ij}\) representing the variance of \(x_i - x_j\).

In this section, we provide a detailed analysis of the covariance matrix \(\gtens{\Sigma}\) and its associated matrix \(\tens{S}\). In particular, in \Cref{sec:nonuqinie-cov}, we explore the identifiability, particularly how \(\gtens{\Sigma}\) is inherently non-unique while \(\tens{S}\) remains unique and identifiable. In \Cref{sec:interpret-cov} we present how to interpret and visualize these matrices. In \Cref{sec:tier}, we examine hierarchical clustering of competitors based on the dissimilarity matrix, uncovering performance tiers and relationships. Finally, in \Cref{sec:const-cov} we discuss constraints that allow stable identification of covariance during optimization of likelihood.

\subsection{Non-Uniqueness and Equivalence Class of Covariance} \label{sec:nonuqinie-cov}

We begin by noting that the likelihood function in the Thurstonian models we presented depends on the function of \(z_{ij}\) defined in \eqref{eq:zij}, which itself depends on \(s_{ij}\). That is, \(\tens{S}\) is an observable quantity, while \(\gtens{\Sigma}\) is a latent variable. Below, we formalize the relationship between these two matrices and the equivalence class of covariance matrices that share the same \(\tens{S}\).

Let \(\mathbb{S}^m\) denote the space of symmetric \(m \times m\) matrices and \(\mathbb{S}^m_{\circ}\) be the the space of hollow symmetric matrices where all diagonal elements are zero. Define the map \(\mathcal{S}: \mathbb{S}^m \to \mathbb{S}^m_{\circ}\) that associates a covariance matrix \(\gtens{\Sigma}\) with the matrix \(\tens{S}\), given by
\begin{equation}
    \tens{S} = \mathcal{S}(\gtens{\Sigma}) = \operatorname{diag}(\gtens{\Sigma}) \mathbf{1}^{\intercal} + \mathbf{1} \operatorname{diag}(\gtens{\Sigma})^{\intercal} - 2 \gtens{\Sigma}, \label{eq:map-S}
\end{equation}
where \(\operatorname{diag}(\gtens{\Sigma})\) is a vector containing the diagonal elements of \(\gtens{\Sigma}\). This relation corresponds to \(s_{ij} = \sigma_{ii} + \sigma_{jj} - 2\sigma_{ij}\) in matrix form.

As we will show momentarily, the map \(\mathcal{S}\) is non-injective, as for each \(\tens{S}\), there exist non-unique covariance matrices \(\gtens{\Sigma}\) differing by elements in the kernel of \(\mathcal{S}\), all of which map to the same \(\tens{S}\). Consequently, the preimage of \(\mathcal{S}\) defines the equivalence class of covariance matrices producing the same \(\tens{S}\), given by
\begin{equation}
    [\gtens{\Sigma}] = \mathcal{S}^{-1}(\tens{S}) = \left\{ \gtens{\Sigma}' \in \mathbb{S}^m \;\middle|\; \mathcal{S}(\gtens{\Sigma}') = \tens{S} \right\}.
\end{equation}
This equivalence class partitions \(\mathbb{S}^m\) modulo the kernel of \(\mathcal{S}\), denoted as \(\mathbb{S}^m / \operatorname{ker}(\mathcal{S})\). We now formalize this structure.

\begin{proposition}[Equivalence Class of Covariance] \label{prop:equivalence}
    The map \(\mathcal{S}: \mathbb{S}^m \to \mathbb{S}^m_\circ\), defined in \eqref{eq:map-S}, is a surjective, non-injective linear transformation. Its kernel is given by
    \begin{equation}
        \ker(\mathcal{S}) = \{\vect{v} \mathbf{1}^\intercal + \mathbf{1} \vect{v}^\intercal \mid \vect{v} \in \mathbb{R}^m\}. \label{eq:ker-S}
    \end{equation}
    Consequently, the quotient space \(\mathbb{S}^m / \ker(\mathcal{S})\) represents the space of equivalence classes of covariance matrices of the form
    \begin{equation}
        [\gtens{\Sigma}] = \{\gtens{\Sigma} + \vect{v} \mathbf{1}^\intercal + \mathbf{1} \vect{v}^\intercal \mid \vect{v} \in \mathbb{R}^m\}, \label{eq:equivalence}
    \end{equation}
    where all elements of \([\gtens{\Sigma}]\) map to the same matrix \(\tens{S}\) under \(\mathcal{S}\).
\end{proposition}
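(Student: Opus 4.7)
The plan is to verify the four assertions in the statement in the order they appear: linearity and well-definedness of the codomain, surjectivity, characterization of the kernel, and then the quotient-space consequence. Each step is a short algebraic check, so the bulk of the work is setting up the right candidate preimage and then proving the kernel inclusion in both directions.

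First I would establish that $\mathcal{S}$ is a well-defined linear map into $\mathbb{S}^m_\circ$. Linearity is immediate because each of the three terms in \eqref{eq:map-S}, namely $\operatorname{diag}(\gtens{\Sigma})\mathbf{1}^{\intercal}$, $\mathbf{1}\operatorname{diag}(\gtens{\Sigma})^{\intercal}$, and $-2\gtens{\Sigma}$, is linear in the entries of $\gtens{\Sigma}$. Symmetry of the image follows since $\gtens{\Sigma}$ is symmetric and $(\operatorname{diag}(\gtens{\Sigma})\mathbf{1}^\intercal)^\intercal = \mathbf{1}\operatorname{diag}(\gtens{\Sigma})^\intercal$. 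Hollowness is checked on the diagonal: $[\mathcal{S}(\gtens{\Sigma})]_{ii} = \sigma_{ii} + \sigma_{ii} - 2\sigma_{ii} = 0$.

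Next I would prove surjectivity by exhibiting an explicit right inverse. Given any $\tens{S}\in\mathbb{S}^m_\circ$, take $\gtens{\Sigma} \coloneqq -\tfrac{1}{2}\tens{S}$. Since $\tens{S}$ is hollow, $\operatorname{diag}(\gtens{\Sigma}) = \vect{0}$, so the first two terms of \eqref{eq:map-S} vanish and $\mathcal{S}(\gtens{\Sigma}) = -2\gtens{\Sigma} = \tens{S}$. This directly gives surjectivity, and non-injectivity will follow once the kernel is shown to be nontrivial.

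The main obstacle, and the only part requiring care, is the exact characterization of $\ker(\mathcal{S})$ in \eqref{eq:ker-S}. For the inclusion $\supseteq$, plugging $\gtens{\Sigma} = \vect{v}\mathbf{1}^\intercal + \mathbf{1}\vect{v}^\intercal$ into \eqref{eq:map-S} gives $\sigma_{ij} = v_i + v_j$ and $\sigma_{ii} = 2v_i$, so $s_{ij} = 2v_i + 2v_j - 2(v_i + v_j) = 0$. For the reverse inclusion $\subseteq$, suppose $\mathcal{S}(\gtens{\Sigma}) = \vect{0}$, which entrywise says $\sigma_{ij} = \tfrac{1}{2}(\sigma_{ii} + \sigma_{jj})$ for all $i,j$. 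Defining $v_i \coloneqq \tfrac{1}{2}\sigma_{ii}$ yields $\sigma_{ij} = v_i + v_j$, i.e., $\gtens{\Sigma} = \vect{v}\mathbf{1}^\intercal + \mathbf{1}\vect{v}^\intercal$. This is the step to be careful with, since one must verify that the off-diagonal identity propagates from the diagonal entries alone, which is exactly what the equation $s_{ij}=0$ guarantees. Non-injectivity then follows since $\vect{v}\neq\vect{0}$ produces a nonzero element of the kernel.

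Finally, the quotient statement follows from the first isomorphism theorem for vector spaces: the fibers of a linear surjection are the cosets of its kernel. Since $\gtens{\Sigma}$ and $\gtens{\Sigma}'$ satisfy $\mathcal{S}(\gtens{\Sigma}) = \mathcal{S}(\gtens{\Sigma}')$ if and only if $\gtens{\Sigma} - \gtens{\Sigma}' \in \ker(\mathcal{S})$, the equivalence class of $\gtens{\Sigma}$ is precisely \eqref{eq:equivalence}, and $\mathbb{S}^m/\ker(\mathcal{S}) \cong \mathbb{S}^m_\circ$ via the induced map. This completes the proof outline.
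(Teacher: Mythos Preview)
Your proposal is correct and follows essentially the same approach as the paper: the surjectivity argument via the preimage $-\tfrac{1}{2}\tens{S}$ is identical, and your kernel computation (defining $v_i \coloneqq \tfrac{1}{2}\sigma_{ii}$ from $s_{ij}=0$) matches the paper's rearrangement $\gtens{\Sigma}' = \tfrac{1}{2}(\operatorname{diag}(\gtens{\Sigma}')\mathbf{1}^\intercal + \mathbf{1}\operatorname{diag}(\gtens{\Sigma}')^\intercal)$. Your version is slightly more complete, since you also verify well-definedness of the codomain, linearity, and the $\supseteq$ kernel inclusion explicitly, whereas the paper leaves these implicit.
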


\begin{proof}
    To determine \(\ker(\mathcal{S})\), consider \(\gtens{\Sigma}' \in \mathbb{S}^m\) such that \(\mathcal{S}(\gtens{\Sigma}') = \tens{0}\). From \eqref{eq:map-S}, it follows that
    \begin{equation*}
        \operatorname{diag}(\gtens{\Sigma}') \mathbf{1}^\intercal + \mathbf{1} \operatorname{diag}(\gtens{\Sigma}')^\intercal - 2 \gtens{\Sigma}' = \tens{0}.
    \end{equation*}
    Rearranging, we find
    \begin{equation*}
        \gtens{\Sigma}' = \frac{1}{2} \left(\operatorname{diag}(\gtens{\Sigma}') \mathbf{1}^\intercal + \mathbf{1} \operatorname{diag}(\gtens{\Sigma}')^\intercal\right),
    \end{equation*}
    implying that any \(\gtens{\Sigma}' \in \ker(\mathcal{S})\) must be of the form given in \eqref{eq:ker-S}. The equivalence class \([\gtens{\Sigma}] = \mathcal{S}^{-1}(\tens{S})\) is obtained by adding elements of \(\ker(\mathcal{S})\) to a representative \(\gtens{\Sigma}\), yielding \eqref{eq:equivalence}.

    To show \(\mathcal{S}\) is surjective, observe that for any \(\tens{S} \in \mathbb{S}^m_\circ\), the matrix \(-\frac{1}{2} \tens{S} \in \mathbb{S}^m\) satisfies \(\mathcal{S}(-\frac{1}{2} \tens{S}) = \tens{S}\). Hence, every \(\tens{S} \in \mathbb{S}^m_\circ\) has at least one preimage, proving surjectivity.
\end{proof}

As demonstrated in \Cref{prop:equivalence}, the covariance matrix \(\gtens{\Sigma}\) is not unique, as adding any symmetric rank-one matrix to it leaves the likelihood invariant. Consequently, \(\gtens{\Sigma}\) is not identifiable by \Cref{def:identifiable}.

\subsection{Interpretation and Visualization of Covariance} \label{sec:interpret-cov}

While the covariance matrix \(\gtens{\Sigma}\) is not unique, the matrix \(\tens{S}\) is unique and identifiable. This makes \(\tens{S}\) the preferred object for interpreting relationships between competitors. Unlike \(\gtens{\Sigma}\), which represents similarity, \(\tens{S}\) plays the role of a \emph{dissimilarity matrix}, as commonly explored in paired comparison literature \citep{OLIVARES-2005, BOCKENHOLT-2006}. In fact, under suitable conditions, \(\tens{S}\) can be interpreted as a distance matrix.

For \(\tens{S}\) to qualify as a distance matrix, it must be non-negative. This holds if and only if the doubly-centered covariance matrix, defined as
\begin{equation}
    \tilde{\gtens{\Sigma}} = \tens{P} \gtens{\Sigma} \tens{P}, \label{eq:double-centered}
\end{equation}
is positive semi-definite. Here, \(\tens{P}\) is the centering matrix from \eqref{eq:P}. The matrix \(\tilde{\gtens{\Sigma}}\) represents the covariance of mean-centered scores, \(\tilde{\vect{x}} = \tens{P} \vect{x}\). Importantly, \(\tilde{\gtens{\Sigma}} \in [\gtens{\Sigma}]\) and is unique within the equivalence class \([\gtens{\Sigma}]\). Specifically, for any \(\gtens{\Sigma}', \gtens{\Sigma}'' \in [\gtens{\Sigma}]\), it holds that \(\tens{P} \gtens{\Sigma}' \tens{P} = \tens{P} \gtens{\Sigma}'' \tens{P} = \tilde{\gtens{\Sigma}}\). This ensures that \(\tilde{\gtens{\Sigma}}\) is well-defined and serves as a canonical representation of the covariance structure.

A matrix \(\tens{S}\) is called a Euclidean distance matrix if there exist spatial points \(\vect{p}_1, \dots, \vect{p}_m\) such that \(s_{ij} = \|\vect{p}_i - \vect{p}_j\|^2\). \citet[Theorem 14.2.1]{MARDIA-1979} guarantees that \(\tens{S}\) is a Euclidean distance matrix if and only if \(\tilde{\gtens{\Sigma}}\) is positive semi-definite. In our setting, this condition is always satisfied, as we enforce the factor covariance model in \eqref{eq:factor-cov} as
\begin{equation}
    \gtens{\Sigma} = \tens{D} + \gtens{\Lambda} \gtens{\Lambda}^\intercal, \label{eq:cov-factor-2}
\end{equation}
where \( \tens{D} \) is a positive diagonal matrix (\( \tens{D} > \tens{0} \)), ensuring that \( \gtens{\Sigma} \) is positive definite. Consequently, \( \tilde{\gtens{\Sigma}} \) is positive semi-definite, making \( \tens{S} \) a Euclidean distance matrix.

This property enables meaningful visualization of \(\tens{S}\) using multi-dimensional scaling (MDS). MDS (see \eg \cite[Chapter 14]{MARDIA-1979} or \cite[Section 5.5]{SEBER-2005}) constructs a set of points in two- or three-dimensional space such that their pairwise distances approximate the distances in \(\tens{S}\). This approach is particularly suitable for visualizing \(\tens{S}\) or similar distance matrices derived from covariance models.

\begin{figure}[t!]
    \centering
    \includegraphics[width=\textwidth]{\figdir/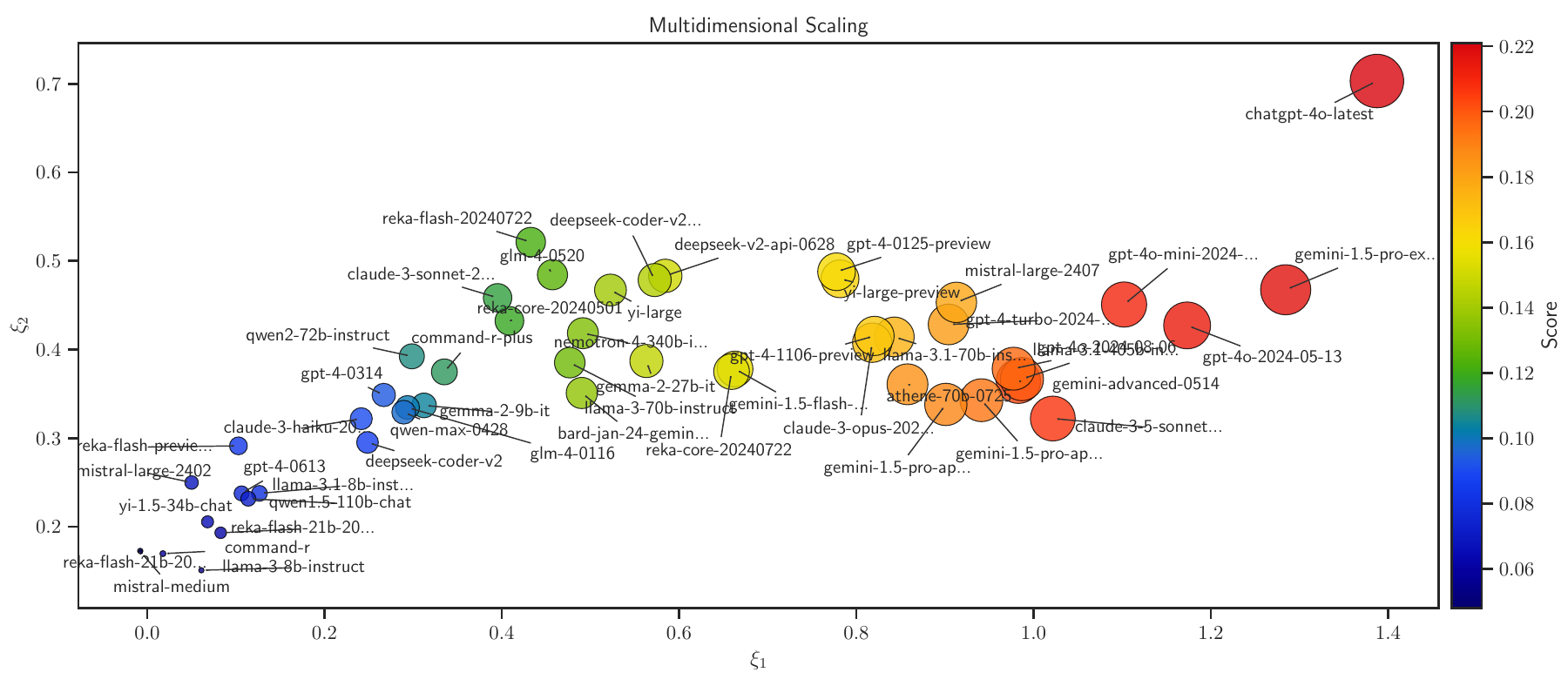}
    \caption{MDS projection of the distance matrix \(\tens{Z}\) onto two dimensions, focusing on the top 50 competitors ranked by Model 18 of \Cref{tab:model-selection}. In the scatter plot, circle size and color are proportional to the competitors' scores.}
    \label{fig:mds}
\end{figure}

Here, we use the matrix \(\tens{Z}\) as the distance matrix, as defined in \eqref{eq:zij}, where \(z_{ij} = (x_i - x_j) / \sqrt{s_{ij}}\), capturing both score differences and dissimilarities derived from covariance. The dissimilarity represented by \(\tens{Z}\) is visualized in \Cref{fig:mds} using MDS, showing the first two principal coordinates in a two-dimensional space. Interestingly, the spatial arrangement of points in the plot not only reflects the pairwise dissimilarities but also aligns well with the overall ranking of competitors, effectively capturing their relative scores. This highlights the ability of MDS to extract meaningful patterns from the dissimilarity matrix alone, without access to the values of the scores.

A companion approach to MDS is kernel PCA \citep{SCHOLKOPF-1999,WILLIAMS-2000}, applied earlier in \Cref{sec:cor} and visualized in \Cref{fig:kpca}. Both techniques aim to uncover relationships in lower-dimensional spaces, with kernel PCA operating on points in a feature space and MDS working directly with pairwise distances. These methods are dual representations: PCA identifies principal components of the data, while MDS identifies principal coordinates of a distance matrix \citep[Section 14.3]{MARDIA-1979}. Notably, both the kernel PCA in \Cref{fig:kpca} (projected into three dimensions) and the MDS in \Cref{fig:mds} (projected into two dimensions) rely on the same dissimilarity matrix \(\tens{Z}\), offering consistent visual representations of the relationships between competitors.

\subsection{Hierarchical Clustering of Competitor Performance} \label{sec:tier}

To complement the analysis of dissimilarity using PCA and MDS, we applied hierarchical agglomerative clustering \citep[Section 14.3.12]{HASTIE-2009} with optimal leaf ordering \citep{BARJOSEPH-2001} to the dissimilarity matrix \(\tens{Z}\). We recall that the matrix \(\tens{Z}\) incorporates both score differences and dissimilarities derived from Thurstonian covariance in our generalized models. This integration of covariance within the model enhances the interpretability of the clustering results by capturing both the performance levels of competitors and their correlation structure.

\begin{figure}[t!]
    \centering
    \includegraphics[width=\textwidth]{\figdir/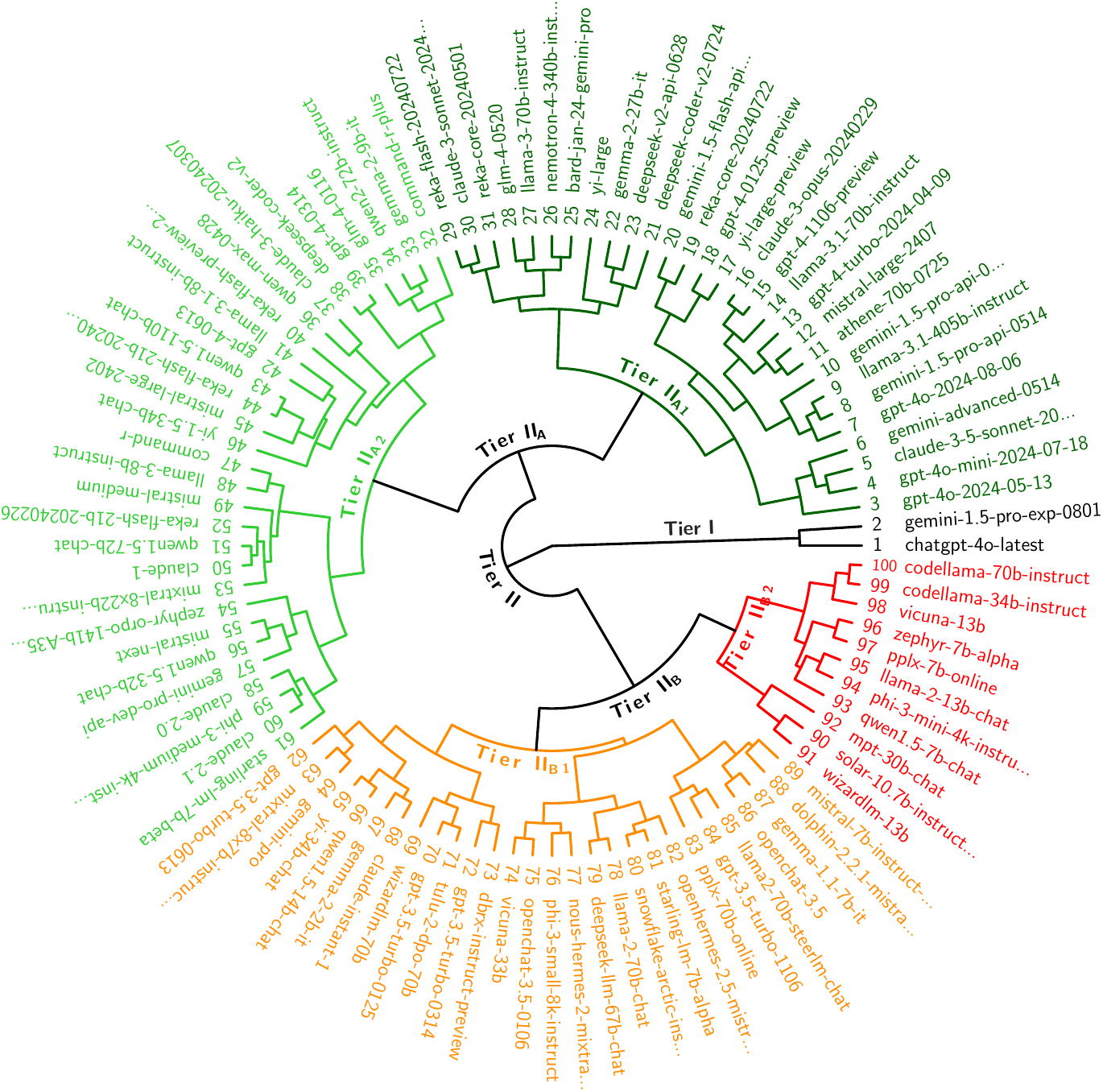}
    \caption{Hierarchical clustering of the top \(100\) competitors based on the distance matrix \(\tens{Z}\) derived from Model 18 of \Cref{tab:model-selection}. The clustering reveals performance tiers, with Tier I consisting of the top two competitors (ChatGPT-4 Latest and Gemini 1.5 Experimental) and Tier II further subdivided into groups representing decreasing performance levels. This structure highlights the relationships and relative strengths among competitors.}
    \label{fig:cluster}
\end{figure}

The clustering analysis focuses on the top 100 competitors ranked using Model 18 from \Cref{tab:model-selection}. The resulting dendrogram, shown in \Cref{fig:cluster}, reveals a hierarchical structure that organizes competitors into distinct tiers. Interestingly, despite the clustering algorithm having access only to the dissimilarity matrix \(\tens{Z}\), which encodes score differences (but not the values of the scores) and covariance-derived uncertainties, the clustering order closely aligns with the competitors' rankings. Each cluster roughly corresponds to a contiguous range of ranks, albeit with minor variations, demonstrating the ability of the clustering method to infer meaningful performance groupings solely from relative differences.

This hierarchical structure provides additional insights into the relationships between competitors, suggesting a natural stratification into performance-based tiers. Tier I includes the two leading models, ChatGPT-4o Latest (as of September 2024) and Gemini 1.5 Pro Experimental, reflecting their dominance. The remaining 98 models form Tier II, which is further divided into two subgroups: \(\mathrm{II}_{\mathrm{A}}\) (green theme) and \(\mathrm{II}_{\mathrm{B}}\) (red theme). These subgroups are further refined into smaller clusters: \(\mathrm{II}_{\mathrm{A}}\) is split into \(\mathrm{II}_{\mathrm{A}_1}\) (dark green) and \(\mathrm{II}_{\mathrm{A}_2}\) (light green), while \(\mathrm{II}_{\mathrm{B}}\) is split into \(\mathrm{II}_{\mathrm{B}_1}\) (light red) and \(\mathrm{II}_{\mathrm{B}_2}\) (dark red).

The meaningfulness of these groupings arises from the use of \(\tens{Z}\), which integrates performance scores with covariance-based dissimilarities---a capability enabled by incorporating Thurstonian models in our framework. This analysis complements rankings by uncovering hierarchical structures and relational patterns among competitors.

While our analysis focuses on hierarchical clustering derived from the statistical properties of the dissimilarity matrix, alternative clustering approaches, such as those leveraging semantic relationships (e.g., embeddings or linguistic features), could provide complementary insights into model relationships. Future research could explore these directions, integrating semantic and statistical perspectives to uncover deeper insights into competitor relationships.

\subsection{Constraint on Thurstonian Covariance Model} \label{sec:const-cov}

In \Cref{sec:const} we presented constraints to resolve the symmetry of the likelihood functions with respect to transformations of the parameters. Here, we provide further detail on the second symmetry presented therein. Specifically, we recall that the models are invariant under the transformation \((x_i, \sigma_{ij}) \mapsto (t x_i, t^2 \sigma_{ij})\) for an arbitrary \(t > 0\).

One approach to resolve the arbitrariness of the parameters introduced by such translation is to impose the constraint
\begin{equation}
    \mathcal{C} \coloneqq \frac{1}{2m} \sum_{i, j = 1}^m s_{ij} = 1, \label{eq:const-2}
\end{equation}
where the constant \(\frac{1}{2m}\) is arbitrary, chosen for convenience as we will explain momentarily. Since \(s_{ij}\) represents the variance of the difference \(x_{ij} = x_i - x_j\), this constraint ensures that the total variance of all random processes \(x_{ij}\) is fixed. We will further show that this constraint can be directly expressed in terms of the covariance matrix \(\gtens{\Sigma}\).

\begin{proposition} \label{prop:sigma-constraint}
    The constraint in \eqref{eq:const-2} is equivalent to
    \begin{equation}
        \operatorname{trace}(\tilde{\gtens{\Sigma}}) = 1, \label{eq:treq}
    \end{equation}
    where \(\tilde{\gtens{\Sigma}} \coloneqq \tens{P} \gtens{\Sigma} \tens{P}\) is the doubly-centered covariance matrix given in \eqref{eq:double-centered}, and \(\tens{P} \coloneqq \tens{I} - \frac{1}{m} \mathbf{1} \mathbf{1}^{\intercal}\) is the projection matrix defined in \eqref{eq:P}.
\end{proposition}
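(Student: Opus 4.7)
The plan is to establish the equivalence by expanding both sides in terms of the entries of $\gtens{\Sigma}$ and showing they coincide, using only the definition $s_{ij} = \sigma_{ii} + \sigma_{jj} - 2\sigma_{ij}$, the cyclic and linearity properties of the trace, and the idempotency of $\tens{P}$.

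First I would rewrite the left-hand side $\mathcal{C}$. Substituting $s_{ij} = \sigma_{ii} + \sigma_{jj} - 2 \sigma_{ij}$ and splitting the double sum, the diagonal-trace terms each contribute $m \operatorname{trace}(\gtens{\Sigma})$, while the cross term contributes $-2 \sum_{i,j} \sigma_{ij} = -2 \mathbf{1}^\intercal \gtens{\Sigma} \mathbf{1}$. Dividing by $2m$ yields
\begin{equation*}
    \mathcal{C} = \operatorname{trace}(\gtens{\Sigma}) - \frac{1}{m} \mathbf{1}^{\intercal} \gtens{\Sigma} \mathbf{1}.
\end{equation*}

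Next I would expand the right-hand side $\operatorname{trace}(\tilde{\gtens{\Sigma}}) = \operatorname{trace}(\tens{P} \gtens{\Sigma} \tens{P})$. By the cyclic property of the trace and the fact that $\tens{P}$ is an orthogonal projection (so $\tens{P}^2 = \tens{P}$), we have $\operatorname{trace}(\tens{P} \gtens{\Sigma} \tens{P}) = \operatorname{trace}(\tens{P}^2 \gtens{\Sigma}) = \operatorname{trace}(\tens{P} \gtens{\Sigma})$. Substituting $\tens{P} = \tens{I} - \frac{1}{m} \mathbf{1} \mathbf{1}^{\intercal}$ and using $\operatorname{trace}(\mathbf{1} \mathbf{1}^{\intercal} \gtens{\Sigma}) = \mathbf{1}^{\intercal} \gtens{\Sigma} \mathbf{1}$ gives
\begin{equation*}
    \operatorname{trace}(\tilde{\gtens{\Sigma}}) = \operatorname{trace}(\gtens{\Sigma}) - \frac{1}{m} \mathbf{1}^{\intercal} \gtens{\Sigma} \mathbf{1}.
\end{equation*}
Comparing the two displayed expressions yields $\mathcal{C} = \operatorname{trace}(\tilde{\gtens{\Sigma}})$, so the constraint $\mathcal{C} = 1$ is equivalent to $\operatorname{trace}(\tilde{\gtens{\Sigma}}) = 1$. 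This also retroactively justifies the choice of the normalizing constant $\frac{1}{2m}$ in \eqref{eq:const-2}.

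There is no real obstacle here; the result is essentially a bookkeeping identity. The only subtle point worth emphasizing is invariance within the equivalence class $[\gtens{\Sigma}]$ from \Cref{prop:equivalence}: for any $\vect{v} \in \mathbb{R}^m$, the addition $\vect{v}\mathbf{1}^\intercal + \mathbf{1}\vect{v}^\intercal$ is annihilated by the two-sided centering $\tens{P} (\cdot) \tens{P}$, so $\tilde{\gtens{\Sigma}}$ (and hence $\operatorname{trace}(\tilde{\gtens{\Sigma}})$) is well-defined on the quotient space $\mathbb{S}^m / \ker(\mathcal{S})$. This consistency is expected, since the left-hand side $\mathcal{C}$ depends only on $\tens{S}$ and is therefore automatically invariant on each equivalence class.
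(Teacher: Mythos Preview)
Your proof is correct and follows essentially the same route as the paper: both reduce $\mathcal{C}$ to $\operatorname{trace}(\gtens{\Sigma}) - \frac{1}{m}\mathbf{1}^{\intercal}\gtens{\Sigma}\mathbf{1}$ (you by directly expanding the double sum entry-wise, the paper by writing $\mathcal{C} = \frac{1}{2m}\mathbf{1}^{\intercal}\tens{S}\mathbf{1}$ and substituting the matrix form of $\tens{S}$), and both then use the cyclic property of the trace together with the idempotency of $\tens{P}$ to obtain the same expression for $\operatorname{trace}(\tilde{\gtens{\Sigma}})$. Your additional remark on invariance of $\operatorname{trace}(\tilde{\gtens{\Sigma}})$ across the equivalence class $[\gtens{\Sigma}]$ is correct and a nice complement, though not part of the paper's proof.
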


\begin{proof}
    Recall from \eqref{eq:map-S} that the elements \(s_{ij}\) of the matrix \(\tens{S}\) are related to \(\gtens{\Sigma}\) by
    \begin{equation}
        \tens{S} = \operatorname{diag}(\gtens{\Sigma}) \mathbf{1}^{\intercal} + \mathbf{1} \operatorname{diag}(\gtens{\Sigma})^{\intercal} - 2 \gtens{\Sigma}. \label{eq:S-alt}
    \end{equation}
    On the other hand, the constraint in \eqref{eq:const-2} can be written equivalently as
    \begin{equation}
        \mathcal{C} = \frac{1}{2m} \mathbf{1}^{\intercal} \tens{S} \mathbf{1}. \label{eq:const-2-alt}
    \end{equation}
    Substituting \eqref{eq:S-alt} into \eqref{eq:const-2-alt} and noting that \(\mathbf{1}^{\intercal} \operatorname{diag}(\gtens{\Sigma}) = \operatorname{trace}(\gtens{\Sigma})\), we obtain
    \begin{equation}
        \mathcal{C} = \operatorname{trace}(\gtens{\Sigma}) - \frac{1}{m} \mathbf{1}^{\intercal} \gtens{\Sigma} \mathbf{1}. \label{eq:tr-tild-1}
    \end{equation}

    Next, using the cyclic property of the trace, we have \(\operatorname{trace}(\tilde{\gtens{\Sigma}}) = \operatorname{trace}(\tens{P} \gtens{\Sigma} \tens{P}) = \operatorname{trace}(\gtens{\Sigma} \tens{P}^2)\). Since \(\tens{P}\) is idempotent (\(\tens{P}^2 = \tens{P}\)), this simplifies to
    \begin{equation}
        \operatorname{trace}(\tilde{\gtens{\Sigma}}) = \operatorname{trace}(\gtens{\Sigma}) - \frac{1}{m} \operatorname{trace}(\gtens{\Sigma} \mathbf{1} \mathbf{1}^{\intercal}). \label{eq:tr-tild-2}
    \end{equation}
    Moreover, by the cyclic property of the trace, we have \(\operatorname{trace}(\gtens{\Sigma} \mathbf{1} \mathbf{1}^{\intercal}) = \mathbf{1}^{\intercal} \gtens{\Sigma} \mathbf{1}\). Substituting this into \eqref{eq:tr-tild-2}, we find that \(\operatorname{trace}(\tilde{\gtens{\Sigma}})\) equals \(\mathcal{C}\) as expressed in \eqref{eq:tr-tild-1}. Thus, the constraint \eqref{eq:const-2} is equivalent to \(\operatorname{trace}(\tilde{\gtens{\Sigma}}) = 1\), completing the proof.
\end{proof}

By using the factor model \eqref{eq:factor-cov}, the constraint \(\mathcal{C}\) can also be expressed directly in terms of the diagonal matrix \(\tens{D}\) and the factor matrix \(\gtens{\Lambda}\). Additionally, for optimization purposes, we provide the derivatives of \(\mathcal{C}\) with respect to these variables. The following proposition formalizes these results.

\begin{proposition} \label{prop:factor-constraint}
    If the covariance matrix \(\gtens{\Sigma}\) is expressed in the factorized form \(\gtens{\Sigma} = \tens{D} + \gtens{\Lambda} \gtens{\Lambda}^{\intercal}\), then the constraint \(\mathcal{C}\) in \eqref{eq:const-2} becomes
    \begin{equation}
        \mathcal{C} = \left(1 - \frac{1}{m} \right) \operatorname{trace}(\tens{D}) + \Vert \gtens{\Lambda} \Vert_F^2 - \frac{1}{m} \Vert \gtens{\Lambda}^{\intercal} \mathbf{1} \Vert_2^2, \label{eq:constraint-D-lambda}
    \end{equation}
    where \(\Vert \cdot \Vert_F\) denotes the Frobenius norm. Furthermore, the derivatives of \(\mathcal{C}\) are given by
    \begin{subequations}
    \begin{align}
        \frac{\partial \mathcal{C}}{\partial \tens{D}} &= \left(1 - \frac{1}{m} \right) \tens{I}, \\
        \frac{\partial \mathcal{C}}{\partial \gtens{\Lambda}} &= 2 \tens{P} \gtens{\Lambda}.
    \end{align}
    \end{subequations}
\end{proposition}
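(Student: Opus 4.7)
The plan is to start from the clean trace identity already established in Proposition~\ref{prop:sigma-constraint}, namely $\mathcal{C} = \operatorname{trace}(\gtens{\Sigma}) - \tfrac{1}{m} \mathbf{1}^{\intercal} \gtens{\Sigma} \mathbf{1}$ (this appears as equation~\eqref{eq:tr-tild-1} in that proof), and then substitute the factorization $\gtens{\Sigma} = \tens{D} + \gtens{\Lambda} \gtens{\Lambda}^{\intercal}$ into each of the two terms separately. Linearity of the trace reduces everything to computing four scalar quantities, none of which is difficult in isolation.

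Concretely, I would proceed in the following order. First, expand $\operatorname{trace}(\gtens{\Sigma}) = \operatorname{trace}(\tens{D}) + \operatorname{trace}(\gtens{\Lambda} \gtens{\Lambda}^{\intercal})$, and recognize the second piece as the Frobenius norm $\Vert \gtens{\Lambda} \Vert_F^2$ via the standard identity $\operatorname{trace}(\tens{A}\tens{A}^{\intercal}) = \Vert \tens{A} \Vert_F^2$. Second, expand the quadratic form $\mathbf{1}^{\intercal} \gtens{\Sigma} \mathbf{1} = \mathbf{1}^{\intercal} \tens{D} \mathbf{1} + \mathbf{1}^{\intercal} \gtens{\Lambda} \gtens{\Lambda}^{\intercal} \mathbf{1}$, and observe that since $\tens{D}$ is diagonal, $\mathbf{1}^{\intercal} \tens{D} \mathbf{1} = \operatorname{trace}(\tens{D})$, while the second piece is precisely $\Vert \gtens{\Lambda}^{\intercal} \mathbf{1} \Vert_2^2$. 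Third, assemble the two expansions and collect the $\operatorname{trace}(\tens{D})$ coefficients, which combine as $1 - \tfrac{1}{m}$, yielding the claimed expression~\eqref{eq:constraint-D-lambda}.

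For the derivatives, I would differentiate each term of \eqref{eq:constraint-D-lambda} directly. The derivative with respect to $\tens{D}$ affects only the $(1-\tfrac{1}{m})\operatorname{trace}(\tens{D})$ term (since $\gtens{\Lambda}$ is independent of $\tens{D}$), and gives $(1 - \tfrac{1}{m}) \tens{I}$ by the standard rule $\partial \operatorname{trace}(\tens{D})/\partial \tens{D} = \tens{I}$. The derivative with respect to $\gtens{\Lambda}$ involves the two Frobenius-type terms: using $\partial \Vert \gtens{\Lambda} \Vert_F^2 / \partial \gtens{\Lambda} = 2\gtens{\Lambda}$ and $\partial \Vert \gtens{\Lambda}^{\intercal} \mathbf{1} \Vert_2^2 / \partial \gtens{\Lambda} = 2 \mathbf{1} \mathbf{1}^{\intercal} \gtens{\Lambda}$ (which follows from writing $\Vert \gtens{\Lambda}^{\intercal}\mathbf{1} \Vert_2^2 = \operatorname{trace}(\gtens{\Lambda}^{\intercal} \mathbf{1} \mathbf{1}^{\intercal} \gtens{\Lambda})$), we obtain $2 \gtens{\Lambda} - \tfrac{2}{m} \mathbf{1} \mathbf{1}^{\intercal} \gtens{\Lambda} = 2 (\tens{I} - \tfrac{1}{m} \mathbf{1} \mathbf{1}^{\intercal}) \gtens{\Lambda} = 2 \tens{P} \gtens{\Lambda}$, where the last step uses the definition of $\tens{P}$ from \eqref{eq:P}.

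Because the proposition reduces to chaining a handful of standard trace and norm identities, there is no substantive obstacle here; the proof is essentially bookkeeping. The only place where mild care is needed is in the final step of the derivative computation, where one must recognize the combination $\tens{I} - \tfrac{1}{m} \mathbf{1} \mathbf{1}^{\intercal}$ as the centering projector $\tens{P}$ in order to present the gradient in the compact intrinsic form $2\tens{P}\gtens{\Lambda}$ rather than leaving it in expanded coordinates.
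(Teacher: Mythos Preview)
Your proposal is correct and essentially identical to the paper's proof: both substitute the factorization into the expression $\mathcal{C}=\operatorname{trace}(\gtens{\Sigma})-\tfrac{1}{m}\mathbf{1}^{\intercal}\gtens{\Sigma}\mathbf{1}$ from Proposition~\ref{prop:sigma-constraint} and simplify the four resulting terms with the same trace/norm identities. Your treatment of the derivatives is actually more explicit than the paper's, which simply states that they ``follow directly.''
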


\begin{proof}
    Substituting the factorized form \(\gtens{\Sigma} = \tens{D} + \gtens{\Lambda} \gtens{\Lambda}^{\intercal}\) into \eqref{eq:tr-tild-2}, we have
    \begin{equation}
        \operatorname{trace}(\tilde{\gtens{\Sigma}}) =
        \operatorname{trace}(\tens{D}) +
        \operatorname{trace}(\gtens{\Lambda} \gtens{\Lambda}^{\intercal}) -
        \frac{1}{m} \operatorname{trace}(\tens{D} \mathbf{1} \mathbf{1}^{\intercal}) -
        \frac{1}{m} \operatorname{trace}(\gtens{\Lambda} \gtens{\Lambda}^{\intercal} \mathbf{1} \mathbf{1}^{\intercal}). \label{eq:tr-tilde-sigma2}
    \end{equation}
    The second term simplifies as \(\operatorname{trace}(\gtens{\Lambda} \gtens{\Lambda}^{\intercal}) = \operatorname{trace}(\gtens{\Lambda}^{\intercal} \gtens{\Lambda}) = \Vert \gtens{\Lambda} \Vert_F^2\). For the third term, \(\operatorname{trace}(\tens{D} \mathbf{1} \mathbf{1}^{\intercal}) = \mathbf{1}^{\intercal} \tens{D} \mathbf{1} = \operatorname{trace}(\tens{D})\) since \(\tens{D}\) is diagonal. Also, for the fourth term, \(\operatorname{trace}(\gtens{\Lambda} \gtens{\Lambda}^{\intercal} \mathbf{1} \mathbf{1}^{\intercal}) = \mathbf{1}^{\intercal} \gtens{\Lambda} \gtens{\Lambda}^{\intercal} \mathbf{1} = \Vert \gtens{\Lambda}^{\intercal} \mathbf{1} \Vert_2^2\). Substituting these results into \eqref{eq:tr-tilde-sigma2}, we obtain \eqref{eq:constraint-D-lambda}. The derivatives with respect to \(\tens{D}\) and \(\gtens{\Lambda}\) follow directly.
\end{proof}

\section{Evaluation of Statistical Models} \label{sec:evaluations}

In this section, we systematically evaluate the statistical models introduced in \Cref{sec:stat-model}, focusing on how well they capture empirical data. We begin by detailing the experimental setup used for training and evaluating these models in \Cref{sec:experimental-setup}. This is followed by analyses of model selection, goodness of fit to observed outcomes, and generalization performance on unseen data, as detailed in \Cref{sec:model-selection,sec:model-fit,sec:generalization}, respectively. Additionally, \Cref{sec:marginal} complements the pairwise evaluations in \Cref{sec:match-matrices} by analyzing marginal probabilities, offering an aggregated perspective on win, loss, and tie probabilities across competitors.

\subsection{Details of Experimental Setup} \label{sec:experimental-setup}

The models used in our analysis are listed in \Cref{tab:model-selection}. Rows 1 to 6 include the Bradley-Terry (BT) model and its variants, rows 7 to 18 cover the Rao-Kupper model and its extensions, and rows 19 to 30 represent the Davidson model and its variants. For the BT model, we analyze two forms of the dataset. In rows 1 to 3, we modify the input matrices to incorporate ties by treating each tie as half a win and half a loss, i.e., \(\tens{W} \leftarrow \tens{W} + \frac{1}{2} \tens{T}\), with row 1 following the approach by \cite{CHIANG-2024}. In rows 4 to 6, we did not modify \(\tens{W}\). We recall that in both cases, the BT model does not account for ties, meaning \(\tens{T}\) is effectively ignored.

\begin{table}[t!]
\centering
\small
\begin{adjustbox}{width=1\textwidth}
% \begin{tabular}{rrlrrrrrrrr}
% \begin{tabular}{@{}r@{}rlrrrrrrrrr@{}}
\begin{tabular}{@{}r@{}rlrrrggggrr@{}}
    \toprule
    % \rowcolor{white}
    & & & \multicolumn{2}{c}{Model Features} & \multirow[b]{2}{*}{\makecell[r]{Num.\\Param.}} & \cellcolor{white} & \multicolumn{3}{c}{Cross Entropy} & \multirow[b]{2}{*}{\makecell[r]{Training\\Time}} \\
    \cmidrule(lr){4-5} \cmidrule(lr){8-10}
    % \rowcolor{white}
    & ID & Model & Cov. (\(k_{\mathrm{cov}}\)) & Tie (\(k_{\mathrm{tie}}\)) &  & \cellcolor{white} \(-\ell(\vect{\theta}^{\ast})\) & \cellcolor{white} Win & \cellcolor{white} Loss & \cellcolor{white} Tie & \\
    \midrule
    \org &  1 & \multirow[t]{4}{*}{\makecell[lt]{Bradley-Terry\\(\emph{with tie data})}}    & \no & \no &     129 & 0.6554 &  0.3177 & 0.3376 &    --- &   2.3 \\
    \gen &  2 &                                                                             &   0 & \no &     258 & 0.6552 &  0.3180 & 0.3371 &    --- &   3.8 \\
    \gen &  3 &                                                                             &   3 & \no &     645 & 0.6549 &  0.3178 & 0.3370 &    --- &  34.1 \\
    \addlinespace[1mm]  
    \arrayrulecolor{gray!30}\cline{4-11}  
    \addlinespace[1mm]  
    \org &  4 & \multirow[t]{4}{*}{\makecell[lt]{Bradley-Terry\\(\emph{without tie data})}} & \no & \no &     129 & 0.6351 &  0.3056 & 0.3295 &    --- &   0.0 \\
    \gen &  5 &                                                                             &   0 & \no &     258 & 0.6346 &  0.3059 & 0.3287 &    --- &   1.7 \\
    \gen &  6 &                                                                             &   3 & \no &     645 & 0.6342 &  0.3057 & 0.3285 &    --- &  27.5 \\
    \addlinespace[1mm]  
    \arrayrulecolor{gray!30}\specialrule{0.3pt}{0pt}{0pt}  
    \addlinespace[1mm]  
    \org &  7 & \multirow[t]{4}{*}{Rao-Kupper}                                              & \no &   0 &     130 & 1.0095 &  0.3405 & 0.3462 & 0.3227 &   5.8 \\
    \gen &  8 &                                                                             & \no &   1 &     258 & 1.0106 &  0.3401 & 0.3459 & 0.3245 &   6.9 \\
    \gen &  9 &                                                                             & \no &  10 &    1419 & 1.0055 &  0.3404 & 0.3455 & 0.3196 & 208.1 \\
    \gen & 10 &                                                                             & \no &  20 &    2709 & 1.0050 &  0.3403 & 0.3455 & 0.3192 & 396.9 \\
    \addlinespace[1mm]  
    \arrayrulecolor{gray!30}\cline{4-11}  
    \addlinespace[1mm]  
    \gen & 11 &                                                                             &   0 &   0 &     259 & 1.0092 &  0.3408 & 0.3457 & 0.3228 &   8.4 \\
    \gen & 12 &                                                                             &   0 &   1 &     387 & 1.0103 &  0.3404 & 0.3454 & 0.3245 &   7.5 \\
    \gen & 13 &                                                                             &   0 &  10 &    1548 & 1.0052 &  0.3407 & 0.3449 & 0.3196 & 293.7 \\
    \gen & 14 &                                                                             &   0 &  20 &    2838 & 1.0048 &  0.3406 & 0.3449 & 0.3193 & 664.9 \\
    \addlinespace[1mm]  
    \arrayrulecolor{gray!30}\cline{4-11}  
    \addlinespace[1mm]  
    \gen & 15 &                                                                             &   3 &   0 &     646 & 1.0089 &  0.3405 & 0.3457 & 0.3227 &  36.0 \\
    \gen & 16 &                                                                             &   3 &   1 &     774 & 1.0100 &  0.3400 & 0.3454 & 0.3245 &  36.9 \\
    \gen & 17 &                                                                             &   3 &  10 &    1935 & 1.0049 &  0.3403 & 0.3449 & 0.3196 & 363.5 \\
    \gen & 18 &                                                                             &   3 &  20 &    3225 & 1.0044 &  0.3403 & 0.3449 & 0.3193 & 817.3 \\
    \addlinespace[1mm]  
    \arrayrulecolor{gray!30}\specialrule{0.3pt}{0pt}{0pt}  
    \addlinespace[1mm]  
    \org & 19 & \multirow[t]{4}{*}{Davidson}                                                & \no &   0 &     130 & 1.0100 &  0.3409 & 0.3461 & 0.3231 &   6.0 \\
    \gen & 20 &                                                                             & \no &   1 &     258 & 1.0077 &  0.3413 & 0.3466 & 0.3198 &  10.5 \\
    \gen & 21 &                                                                             & \no &  10 &    1419 & 1.0057 &  0.3404 & 0.3456 & 0.3197 & 253.2 \\
    \gen & 22 &                                                                             & \no &  20 &    2709 & 1.0052 &  0.3404 & 0.3455 & 0.3193 & 602.8 \\
    \addlinespace[1mm]  
    \arrayrulecolor{gray!30}\cline{4-11}  
    \addlinespace[1mm]  
    \gen & 23 &                                                                             &   0 &   0 &     259 & 1.0098 &  0.3411 & 0.3455 & 0.3231 &   8.7 \\
    \gen & 24 &                                                                             &   0 &   1 &     387 & 1.0074 &  0.3415 & 0.3460 & 0.3200 &   8.3 \\
    \gen & 25 &                                                                             &   0 &  10 &    1548 & 1.0055 &  0.3407 & 0.3451 & 0.3197 & 286.9 \\
    \gen & 26 &                                                                             &   0 &  20 &    2838 & 1.0050 &  0.3407 & 0.3450 & 0.3194 & 665.1 \\
    \addlinespace[1mm]  
    \arrayrulecolor{gray!30}\cline{4-11}  
    \addlinespace[1mm]  
    \gen & 27 &                                                                             &   3 &   0 &     646 & 1.0094 &  0.3410 & 0.3453 & 0.3231 &  34.6 \\
    \gen & 28 &                                                                             &   3 &   1 &     774 & 1.0070 &  0.3412 & 0.3460 & 0.3199 &  35.8 \\
    \gen & 29 &                                                                             &   3 &  10 &    1935 & 1.0051 &  0.3407 & 0.3448 & 0.3197 & 366.4 \\
    \gen & 30 &                                                                             &   3 &  20 &    3225 & 1.0047 &  0.3405 & 0.3448 & 0.3194 & 804.9 \\
    \arrayrulecolor{black}
    \bottomrule
\end{tabular}
\end{adjustbox}
\caption{Configurations and training details of the 30 statistical models used throughout the analysis. These models are referenced by their ID in various sections of the paper. Rows marked with (\org) represent the prior models, while unmarked rows correspond to the generalized models proposed in this work.} \label{tab:model-selection}
\end{table}

Rows 1, 4, 7, and 19, marked with (\org) in \Cref{tab:model-selection}, represent the standard versions of the BT, Rao-Kupper, and Davidson models as found in the literature. All other rows correspond to the generalized models proposed in this work, with features detailed in the third and fourth columns of the table.

The third column represents the covariance configuration. In our analysis, we considered three cases for covariance: no covariance (\no), \(k_{\mathrm{cov}} = 0\), and \(3\). The symbol ``\no'' indicates that the model does not include a covariance structure, excluding the Thurstonian representation. The other two cases employ the factored covariance structure \(\gtens{\Sigma} = \tens{D} + \gtens{\Lambda} \gtens{\Lambda}^{\intercal}\), as given in \eqref{eq:factor-cov}, where \(k_{\mathrm{cov}}\) refers to the rank (\ie the number of columns) of \(\gtens{\Lambda}\). The case \(k_{\mathrm{cov}} = 0\) implies a diagonal covariance matrix, i.e., \(\gtens{\Sigma} = \tens{D}\), where the matrix \(\gtens{\Lambda}\) is absent.

The fourth column represents the tie configuration. We considered five cases: no tie (\no), \(k_{\mathrm{tie}} = 0, 1, 10,\) and \(20\). The symbol ``\no'' indicates that the model does not account for ties, while \(k_{\mathrm{tie}} = 0\) corresponds to the original Rao-Kupper and Davidson models with a single tie parameter. For \(k_{\mathrm{tie}} > 0\), our generalized tie model is employed, where \(k_{\mathrm{tie}}\) represents the rank (\ie the number of columns) of the matrix \(\tens{G}\) in the factor model for ties, as defined in \eqref{eq:tie-factor}.

The fifth column represents the number of parameters in the model. All models include \(m\) parameters for the score vector \(\vect{x}\), where \(m = 129\) corresponds to the number of competitors in the Chatbot Arena dataset. Models with a factored covariance structure include \(m\) additional parameters for the diagonal matrix \(\tens{D}\) and \(m \times k_{\mathrm{cov}}\) parameters for the elements of \(\gtens{\Lambda}\). Similarly, models incorporating ties add one parameter \(\eta\) when \(k_{\mathrm{tie}} = 0\), or \(m \times k_{\mathrm{tie}}\) parameters for the elements of \(\tens{G}\) when \(k_{\mathrm{tie}} > 0\).

We trained these models (except for models 1 and 4) by maximizing the likelihood function \eqref{eq:likelihood} using the BFGS optimization method, while satisfying the constraints in \Cref{sec:const}. This optimization method requires both the loss function \(-\ell(\vect{\theta})\) and its Jacobian \(-\partial \ell(\vect{\theta}) / \partial \vect{\theta}\), which we analytically derived with respect to all parameters for each model and provided during training. To ensure consistency, we used a tolerance level of \(\texttt{tol} = 10^{-8}\) for convergence. Parameters were initialized as follows: scores \(\vect{x}\) were initialized randomly while ensuring their sum is zero, diagonals of \(\tens{D}\) were set to \(m^{-1}\), and all other parameters were initialized to zero. Training time for each model, using an AMD EPYC 7543 processor with 32 cores, is shown in the last column of \Cref{tab:model-selection}.

Models in rows 1 and 4 were trained using the iterative minorization–maximization (MM) algorithm of \cite{NEWMAN-2023}, which offers notable speed advantages over conventional maximum likelihood estimation. MM algorithms have also been extended to certain generalizations of the Bradley-Terry model, as shown by \cite{HUNTER-2004}. Whether MM methods are directly applicable to the more complex generalized models proposed in this work remains an open question and warrants further investigation.

To verify the robustness of our approach, we tested global minimization methods, including basin-hopping \citep{WALES-1997} and simplicial homology global optimization (SHGO) \citep{ENDRES-2018}. While these global methods required significantly longer computation times, they produced solutions consistent with those found by BFGS, confirming the reliability of local optimization for this problem.

\subsection{Model Selection} \label{sec:model-selection}

To evaluate model fit, we computed the negative log-likelihood (NLL) and the cross-entropy losses for win, loss, and tie outcomes, as shown in the sixth to ninth columns of \Cref{tab:model-selection}.

The cross-entropy loss quantifies how well the predicted probabilities align with the empirical probabilities derived from observed data. For a given pair \(\{i, j\}\), the empirical probabilities are computed as
\begin{subequations}
\begin{align*}
    P^e(i \succ j \;|\; \{i, j\}) &= \frac{w_{ij}}{n_{ij}}, \\
    P^e(i \prec j \;|\; \{i, j\}) &= \frac{w_{ji}}{n_{ij}}, \\
    P^e(i \sim j \;|\; \{i, j\}) &= \frac{t_{ij}}{n_{ij}},
\end{align*}
\end{subequations}
where \(n_{ij} \coloneqq w_{ij} + w_{ji} + t_{ij}\) is the total number of matches between competitors \(i\) and \(j\). The overall cross-entropy losses, \(H\), for win, loss, and tie outcomes are calculated as
\begin{subequations}
\begin{align}
    H_{\mathrm{win}}(\vect{\theta}) &= -\sum_{\{i, j\} \in E} P^e(i \succ j \;|\; \{i, j\}) \, \log \left( P(i \succ j \;|\; \{i, j\})(\vect{\theta}) \right) \, P(\{i, j\} \;|\; E), \\
    H_{\mathrm{loss}}(\vect{\theta}) &= -\sum_{\{i, j\} \in E} P^e(i \prec j \;|\; \{i, j\}) \, \log \left( P(i \prec j \;|\; \{i, j\})(\vect{\theta}) \right) \, P(\{i, j\} \;|\; E), \\
    H_{\mathrm{tie}}(\vect{\theta}) &= -\sum_{\{i, j\} \in E} P^e(i \sim j \;|\; \{i, j\}) \, \log \left( P(i \sim j \;|\; \{i, j\})(\vect{\theta}) \right) \, P(\{i, j\} \;|\; E),
\end{align}
\end{subequations}
where \(P(\{i, j\} \;|\; E)\) is the probability of observing a match for the pair \(\{i, j\}\), given by
\begin{equation}
    P(\{i, j\} \;|\; E) = \frac{n_{ij}}{n}, \quad \text{where} \quad n \coloneqq \sum_{\{k, l\} \in E} n_{kl}. \label{eq:prop-match}
\end{equation}
The cross-entropy losses for win, loss, and tie outcomes at \(\vect{\theta}^{\ast}\), the optimal parameter values, are reported in the seventh, eighth, and ninth columns of \Cref{tab:model-selection}, respectively.

The sixth column reports the negative log-likelihood (NLL) of the models at \(\vect{\theta}^{\ast}\), where
\begin{equation}
	\ell(\vect{\theta}) = \frac{1}{n} \sum_{\{i, j\} \in E}
          w_{ij} \log\left( P_{i \succ j}(\vect{\theta}) \right)
	+ w_{ji} \log\left( P_{i \prec j}(\vect{\theta}) \right)
	+ t_{ij} \log\left( P_{i \sim j}(\vect{\theta}) \right).
\end{equation}
This log-likelihood slightly differs from the expression in \eqref{eq:likelihood}, as it excludes a constant term and is scaled by \(\frac{1}{n}\), where \(n\) is defined in \eqref{eq:prop-match}. Notably, the NLL here is equivalent to the sum of the cross-entropy losses for win, loss, and tie outcomes:
\begin{equation*}
    -\ell(\vect{\theta}) = H_{\mathrm{win}}(\vect{\theta}) + H_{\mathrm{loss}}(\vect{\theta}) + H_{\mathrm{tie}}(\vect{\theta}).
\end{equation*}
This relationship can be verified by summing the values in the seventh, eighth, and ninth columns to obtain the value in the sixth column.

Since the BT models do not account for ties, their NLL values are lower than those of tie-inclusive models. This is because the NLL of BT models includes only two terms, \(H_{\mathrm{win}}\) and \(H_{\mathrm{loss}}\), while tie-inclusive models also include \(H_{\mathrm{tie}}\). Consequently, direct comparison of NLLs between these model types is not meaningful.

On the other hand, when comparing individual cross-entropy losses, \(H_{\mathrm{win}}\) and \(H_{\mathrm{loss}}\), we observe that the BT models achieve lower values compared to non-BT models. This discrepancy arises because the BT models predict two outcomes (win and loss), yielding only one independent output, as the probability of loss complements the probability of a win. In contrast, the Rao-Kupper and Davidson models predict three outcomes (win, loss, tie), resulting in two independent output variables. Thus, the BT models fit a \emph{one-dimensional} output space, while the other models fit a \emph{two-dimensional} space. Although the BT models achieve lower cross-entropy losses, the complexity of the Rao-Kupper and Davidson models offers richer predictions.

Within each model category, we observe that increasing the rank \(k_{\mathrm{cov}}\) for covariance or \(k_{\mathrm{tie}}\) for tie models consistently improves the NLL and cross-entropies, indicating better fit. Further evaluation metrics, including goodness-of-fit and generalization performance, are provided in \Cref{sec:model-fit} and \Cref{sec:generalization}, respectively.

\subsection{Model Fit} \label{sec:model-fit}

To assess model fit, we computed multiple metrics, including root-mean-square error (RMSE), Kullback-Leibler (KL) divergence, and Jensen-Shannon (JS) divergence, as presented in \Cref{tab:goodness-fit}.

\begin{table}[t!]
\centering
% \begin{adjustbox}{width=1\textwidth}
\small
% \begin{tabular}{rlrrrrrrrr}
% \begin{tabular}{@{}r@{}rlrrrrrrrrr@{}}
\begin{tabular}{@{}r@{}rlrrggggrrr@{}}
    
    \toprule
    & & & \multicolumn{2}{c}{Model Features} & \multicolumn{4}{c}{RMSE} & \multicolumn{2}{c}{Divergence (\(\times 10^2\))} \\
    \cmidrule(lr){4-5} \cmidrule(lr){6-9} \cmidrule(lr){10-11}
    & ID & Model & Cov. (\(k_{\mathrm{cov}}\)) & Tie (\(k_{\mathrm{tie}}\)) & \cellcolor{white} Win & \cellcolor{white} Loss & \cellcolor{white} Tie & \cellcolor{white} All & KLD & JSD \\
    \midrule
    \org &  1 & \multirow[t]{4}{*}{\makecell[lt]{Bradley-Terry\\(\emph{with tie data})}}    & \no & \no & 29.7 & 29.7 &   --- & 29.7 & 1.49 & 0.44 \\
    \gen &  2 &                                                                             &   0 & \no & 26.2 & 26.2 &   --- & 26.2 & 1.42 & 0.42 \\
    \gen &  3 &                                                                             &   3 & \no & 17.4 & 17.4 &   --- & 17.4 & 1.30 & 0.39 \\
    \addlinespace[1mm]
    \arrayrulecolor{gray!30}\cline{4-11}
    \addlinespace[1mm]
    \org &  4 & \multirow[t]{4}{*}{\makecell[lt]{Bradley-Terry\\(\emph{without tie data})}} & \no & \no & 35.1 & 35.1 &   --- & 35.1 & 1.82 & 0.52 \\
    \gen &  5 &                                                                             &   0 & \no & 31.5 & 31.5 &   --- & 31.5 & 1.71 & 0.49 \\
    \gen &  6 &                                                                             &   3 & \no & 17.3 & 17.3 &   --- & 17.3 & 1.58 & 0.46 \\
    \addlinespace[1mm]
    \arrayrulecolor{gray!30}\specialrule{0.3pt}{0pt}{0pt}
    \addlinespace[1mm]
    \org &  7 & \multirow[t]{4}{*}{Rao-Kupper}                                              & \no &   0 & 48.2 & 69.9 & 103.5 & 77.3 & 3.32 & 0.92 \\
    \gen &  8 &                                                                             & \no &   1 & 46.4 & 67.8 &  99.2 & 74.3 & 3.45 & 0.91 \\
    \gen &  9 &                                                                             & \no &  10 & 34.1 & 34.2 &  23.1 & 30.9 & 2.63 & 0.73 \\
    \gen & 10 &                                                                             & \no &  20 & 34.3 & 32.2 &  16.8 & 28.8 & 2.35 & 0.65 \\
    \addlinespace[1mm] 
    \arrayrulecolor{gray!30}\cline{4-11} 
    \addlinespace[1mm] 
    \gen & 11 &                                                                             &   0 &   0 & 46.5 & 67.9 & 103.6 & 76.4 & 3.23 & 0.90 \\
    \gen & 12 &                                                                             &   0 &   1 & 43.5 & 66.8 &  99.4 & 73.5 & 3.36 & 0.89 \\
    \gen & 13 &                                                                             &   0 &  10 & 29.8 & 31.6 &  22.7 & 28.3 & 2.55 & 0.70 \\
    \gen & 14 &                                                                             &   0 &  20 & 30.4 & 29.1 &  16.7 & 26.1 & 2.26 & 0.63 \\
    \addlinespace[1mm] 
    \arrayrulecolor{gray!30}\cline{4-11} 
    \addlinespace[1mm] 
    \gen & 15 &                                                                             &   3 &   0 & 49.0 & 61.7 & 104.7 & 75.6 & 3.09 & 0.86 \\
    \gen & 16 &                                                                             &   3 &   1 & 48.6 & 58.7 & 100.9 & 73.0 & 3.18 & 0.84 \\
    \gen & 17 &                                                                             &   3 &  10 & 20.0 & 21.2 &  22.1 & 21.1 & 2.42 & 0.67 \\
    \gen & 18 &                                                                             &   3 &  20 & 18.7 & 18.9 &  15.8 & 17.9 & 2.12 & 0.59 \\
    \addlinespace[1mm] 
    \arrayrulecolor{gray!30}\specialrule{0.3pt}{0pt}{0pt} 
    \addlinespace[1mm] 
    \org & 19 & \multirow[t]{4}{*}{Davidson}                                                & \no &   0 & 51.0 & 71.8 & 109.8 & 81.3 & 3.41 & 0.94 \\
    \gen & 20 &                                                                             & \no &   1 & 44.4 & 63.3 &  90.1 & 68.6 & 2.99 & 0.82 \\
    \gen & 21 &                                                                             & \no &  10 & 37.1 & 39.6 &  25.7 & 34.7 & 2.69 & 0.75 \\
    \gen & 22 &                                                                             & \no &  20 & 37.7 & 37.1 &  17.2 & 32.1 & 2.50 & 0.70 \\
    \addlinespace[1mm] 
    \arrayrulecolor{gray!30}\cline{4-11} 
    \addlinespace[1mm] 
    \gen & 23 &                                                                             &   0 &   0 & 49.4 & 70.5 & 109.9 & 80.6 & 3.32 & 0.92 \\
    \gen & 24 &                                                                             &   0 &   1 & 41.1 & 62.4 &  91.4 & 68.1 & 2.94 & 0.81 \\
    \gen & 25 &                                                                             &   0 &  10 & 32.8 & 37.7 &  27.0 & 32.8 & 2.73 & 0.76 \\
    \gen & 26 &                                                                             &   0 &  20 & 35.7 & 32.6 &  18.8 & 30.0 & 2.56 & 0.72 \\
    \addlinespace[1mm] 
    \arrayrulecolor{gray!30}\cline{4-11} 
    \addlinespace[1mm] 
    \gen & 27 &                                                                             &   3 &   0 & 55.1 & 61.1 & 111.0 & 79.8 & 3.18 & 0.89 \\
    \gen & 28 &                                                                             &   3 &   1 & 46.5 & 50.0 &  90.6 & 65.5 & 2.80 & 0.78 \\
    \gen & 29 &                                                                             &   3 &  10 & 20.8 & 22.0 &  25.0 & 22.7 & 2.57 & 0.72 \\
    \gen & 30 &                                                                             &   3 &  20 & 19.1 & 19.0 &  17.1 & 18.4 & 2.43 & 0.68 \\
    \arrayrulecolor{black}
    \bottomrule
\end{tabular}
% \end{adjustbox}
\caption{Goodness-of-fit metrics, including root-mean-square error (RMSE), Kullback-Leibler divergence (KLD), and Jensen-Shannon divergence (JSD), for the 30 statistical models introduced in \Cref{tab:model-selection}. Rows marked with (\org) represent the prior models, while unmarked rows correspond to the generalized models proposed in this work.} \label{tab:goodness-fit}
\end{table}

\paragraph{Root-Mean-Square Error (RMSE).} 
The RMSE quantifies the deviation between the predicted and observed match frequencies. For each pair \(\{i, j\} \in E\), the predicted frequencies for win, loss, and tie outcomes are computed as:
\begin{equation*}
    \hat{w}_{ij} \coloneqq n_{ij} P_{i \succ j}, \quad
    \hat{w}_{ji} \coloneqq n_{ij} P_{i \prec j}, \quad
    \hat{t}_{ij} \coloneqq n_{ij} P_{i \sim j}.
\end{equation*}
The weighted squared errors are given by
\begin{align*}
    e^2_{\mathrm{win}} &= \sum_{\{i, j\} \in E} \frac{n_{ij}}{n} (w_{ij} - \hat{w}_{ij})^2, \\
    e^2_{\mathrm{loss}} &= \sum_{\{i, j\} \in E} \frac{n_{ij}}{n} (w_{ji} - \hat{w}_{ji})^2, \\
    e^2_{\mathrm{tie}} &= \sum_{\{i, j\} \in E} \frac{n_{ij}}{n} (t_{ij} - \hat{t}_{ij})^2.
\end{align*}
The above errors are weighted by \(n_{ij} / n\), the probability of observing a match for the pair \(\{i, j\}\), as given in \eqref{eq:prop-match}, to account for imbalances in the number of matches per pair. The square roots of these quantities (\(e_{\mathrm{win}}, e_{\mathrm{loss}}, e_{\mathrm{tie}}\)) are reported in the fifth to seventh columns of \Cref{tab:goodness-fit}. The overall RMSE, \(e_{\mathrm{all}}\), is shown in the eighth column, where \(e^2_{\mathrm{all}}\) is defined as the average of \(e^2_{\mathrm{win}}\), \(e^2_{\mathrm{loss}}\), and \(e^2_{\mathrm{tie}}\). Similar to earlier trends, BT models show lower errors due to their simpler output space. However, in each category, models with higher \(k_{\mathrm{cov}}\) values for covariance and \(k_{\mathrm{tie}}\) for ties demonstrate notable improvements in model accuracy.

\paragraph{Divergence Metrics.}
To further evaluate model fit, we computed the KL and JS divergences between the predicted and empirical probability distributions. To define divergences, we treated the three outcomes of win, loss, and tie as classes of discrete probability mass functions with \(P^e_{ij} \coloneqq ( P^e_{i \succ j}, P^e_{i \prec j}, P^e_{i \sim j})\) and \(P_{ij} \coloneqq ( P_{i \succ j}, P_{i \prec j}, P_{i \sim j})\). The KL divergence is defined as
\begin{equation*}
    D_{\mathrm{KL}}(P^e \Vert P) = \frac{1}{|E|} \sum_{\{i, j\} \in E}
    P^e_{i \succ j} \log \left( \frac{P^e_{i \succ j}}{P_{i \succ j}} \right) +
    P^e_{i \prec j} \log \left( \frac{P^e_{i \prec j}}{P_{i \prec j}} \right) +
    P^e_{i \sim j} \log \left( \frac{P^e_{i \sim j}}{P_{i \sim j}} \right).
\end{equation*}
Accordingly, the JS divergence is defined by
\begin{equation*}
    D_{\mathrm{JS}}(P^e \Vert P) \coloneqq \frac{1}{2} \left( D_{\mathrm{KL}}(P^e \Vert M) + D_{\mathrm{KL}}(P \Vert M) \right),
\end{equation*}
where for each pair \(\{i, j\}\), we defined the mixture distribution \(M_{ij} \coloneqq \frac{1}{2}(P^e_{ij} + P_{ij})\). The JS divergence is symmetric and bounded between 0 and 1, making it intuitive for interpretation.

The KL and JS divergences are shown in the ninth and tenth columns of \Cref{tab:goodness-fit}. Lower KL and JS values indicate better model fit. Notably, models incorporating covariance and tie factor models yield better results in terms of divergence, reaffirming the effectiveness of these extensions.

\subsection{Generalization Performance} \label{sec:generalization}

\begin{table}[t!]
\centering
% \begin{adjustbox}{width=1\textwidth}
\small
% \begin{tabular}{rlrrrrrrrr}
\begin{tabular}{@{}r@{}rlrrggggrrr@{}}
    
    \toprule
    & & & \multicolumn{2}{c}{Model Features} & \multicolumn{4}{c}{RMSE} & \multicolumn{2}{c}{Divergence (\(\times 10^2\))} \\
    \cmidrule(lr){4-5} \cmidrule(lr){6-9} \cmidrule(lr){10-11}
    & ID & Model & Cov. (\(k_{\mathrm{cov}}\)) & Tie (\(k_{\mathrm{tie}}\)) & \cellcolor{white} Win & \cellcolor{white} Loss & \cellcolor{white} Tie & \cellcolor{white} All & KLD & JSD \\
    \midrule
    \org &  1 & \multirow[t]{4}{*}{\makecell[lt]{Bradley-Terry\\(\emph{with tie data})}}    & \no & \no & 27.4 & 27.4 &  --- & 27.4 & 1.46 & 0.41 \\
    \gen &  2 &                                                                             &   0 & \no & 27.6 & 27.6 &  --- & 27.6 & 1.48 & 0.41 \\
    \gen &  3 &                                                                             &   3 & \no & 27.1 & 27.1 &  --- & 27.1 & 1.55 & 0.43 \\
    \addlinespace[1mm] 
    \arrayrulecolor{gray!30}\cline{4-11} 
    \addlinespace[1mm] 
    \org &  4 & \multirow[t]{4}{*}{\makecell[lt]{Bradley-Terry\\(\emph{without tie data})}} & \no & \no & 30.0 & 30.0 &  --- & 30.0 & 1.74 & 0.48 \\
    \gen &  5 &                                                                             &   0 & \no & 30.3 & 30.3 &  --- & 30.3 & 1.77 & 0.48 \\
    \gen &  6 &                                                                             &   3 & \no & 30.4 & 30.4 &  --- & 30.4 & 2.06 & 0.54 \\
    \addlinespace[1mm] 
    \arrayrulecolor{gray!30}\specialrule{0.3pt}{0pt}{0pt} 
    \addlinespace[1mm] 
    \org &  7 & \multirow[t]{4}{*}{Rao-Kupper}                                              & \no &   0 & 54.5 & 29.1 & 67.4 & 52.8 & 3.16 & 0.88 \\
    \gen &  8 &                                                                             & \no &   1 & 49.9 & 41.9 & 74.3 & 57.0 & 3.31 & 0.87 \\
    \gen &  9 &                                                                             & \no &  10 & 26.3 & 38.6 & 32.1 & 32.7 & 3.17 & 0.85 \\
    \gen & 10 &                                                                             & \no &  20 & 29.0 & 56.3 & 66.4 & 53.0 & 4.14 & 1.00 \\
    \addlinespace[1mm] 
    \arrayrulecolor{gray!30}\cline{4-11} 
    \addlinespace[1mm] 
    \gen & 11 &                                                                             &   0 &   0 & 52.9 & 31.5 & 67.8 & 52.9 & 3.19 & 0.88 \\
    \gen & 12 &                                                                             &   0 &   1 & 46.8 & 45.5 & 75.0 & 57.4 & 3.31 & 0.87 \\
    \gen & 13 &                                                                             &   0 &  10 & 25.9 & 38.5 & 31.6 & 32.4 & 3.10 & 0.84 \\
    \gen & 14 &                                                                             &   0 &  20 & 30.2 & 54.3 & 64.6 & 51.7 & 4.66 & 1.06 \\
    \addlinespace[1mm] 
    \arrayrulecolor{gray!30}\cline{4-11} 
    \addlinespace[1mm] 
    \gen & 15 &                                                                             &   3 &   0 & 50.3 & 33.8 & 68.1 & 52.6 & 3.31 & 0.90 \\
    \gen & 16 &                                                                             &   3 &   1 & 51.6 & 42.1 & 75.0 & 57.9 & 3.59 & 0.93 \\
    \gen & 17 &                                                                             &   3 &  10 & 28.5 & 34.9 & 32.3 & 32.0 & 3.25 & 0.87 \\
    \gen & 18 &                                                                             &   3 &  20 & 35.9 & 59.6 & 67.1 & 55.8 & 4.71 & 1.07 \\
    \addlinespace[1mm] 
    \arrayrulecolor{gray!30}\specialrule{0.3pt}{0pt}{0pt} 
    \addlinespace[1mm] 
    \org & 19 & \multirow[t]{4}{*}{Davidson}                                                & \no &   0 & 54.7 & 30.8 & 70.0 & 54.3 & 3.28 & 0.91 \\
    \gen & 20 &                                                                             & \no &   1 & 43.1 & 24.6 & 42.7 & 37.8 & 2.76 & 0.77 \\
    \gen & 21 &                                                                             & \no &  10 & 27.6 & 41.8 & 31.4 & 34.2 & 2.95 & 0.81 \\
    \gen & 22 &                                                                             & \no &  20 & 28.6 & 65.5 & 73.4 & 59.1 & 3.42 & 0.93 \\
    \addlinespace[1mm] 
    \arrayrulecolor{gray!30}\cline{4-11} 
    \addlinespace[1mm] 
    \gen & 23 &                                                                             &   0 &   0 & 54.0 & 32.8 & 70.2 & 54.5 & 3.31 & 0.92 \\
    \gen & 24 &                                                                             &   0 &   1 & 44.2 & 26.2 & 45.0 & 39.4 & 2.91 & 0.81 \\
    \gen & 25 &                                                                             &   0 &  10 & 26.9 & 40.0 & 28.6 & 32.3 & 3.06 & 0.84 \\
    \gen & 26 &                                                                             &   0 &  20 & 31.0 & 68.3 & 80.4 & 63.5 & 3.51 & 0.96 \\
    \addlinespace[1mm] 
    \arrayrulecolor{gray!30}\cline{4-11} 
    \addlinespace[1mm] 
    \gen & 27 &                                                                             &   3 &   0 & 52.5 & 34.2 & 70.2 & 54.3 & 3.40 & 0.93 \\
    \gen & 28 &                                                                             &   3 &   1 & 40.7 & 28.2 & 44.0 & 38.2 & 2.87 & 0.79 \\
    \gen & 29 &                                                                             &   3 &  10 & 33.6 & 32.7 & 28.5 & 31.6 & 3.31 & 0.89 \\
    \gen & 30 &                                                                             &   3 &  20 & 32.8 & 71.6 & 83.3 & 66.2 & 3.70 & 1.00 \\
    \arrayrulecolor{black}
    \bottomrule
\end{tabular}
% \end{adjustbox}
\caption{Generalization performance of the 30 statistical models introduced in \Cref{tab:model-selection}, evaluated on test data using a 90/10 train-test split, including root-mean-square error (RMSE), Kullback-Leibler divergence (KLD), and Jensen-Shannon divergence (JSD). Rows marked with (\org) represent the prior models, while unmarked rows correspond to the generalized models proposed in this work.} \label{tab:generalization}
\end{table}

To evaluate the models' generalization performance, we trained each model on \(90\%\) of the data and tested predictions on the remaining \(10\%\), with the data randomly split into training and test sets. Results for the weighted RMSE are presented in the fifth to eighth columns of \Cref{tab:generalization}, while the KL and JS divergences are shown in the ninth and tenth columns, respectively. The definitions of RMSE, KL divergence, and JS divergence remain the same as those in \Cref{sec:model-fit}, but here, these metrics are computed on the test data rather than the training data. These generalization results complement the training results presented in \Cref{tab:goodness-fit}, highlighting how model performance extends to unseen data.

These metrics reveal that increasing the number of parameters, such as \(k_{\mathrm{cov}}\) for covariance or \(k_{\mathrm{tie}}\) for tie factor models, initially enhances generalization performance. However, beyond a certain threshold, additional parameters lead to diminishing returns in generalization, even as training fit improves (\Cref{tab:goodness-fit}). This phenomenon is attributed to overparameterization, where the model becomes too complex relative to the amount of data available.

The ratio of model parameters (fifth column of \Cref{tab:model-selection}) to the number of data points (\(|E| = 3455\), the number of pairs) provides insight into this overparameterization. Models with \(k_{\mathrm{tie}} = 1\) to \(k_{\mathrm{tie}} = 10\) maintain a balanced parameter-to-data ratio, achieving strong generalization and fit, as reflected in the RMSE values in \Cref{tab:generalization}. In contrast, models with \(k_{\mathrm{tie}} = 20\) approach the upper limit of this ratio, resulting in improved training fit but reduced generalization, signaling the onset of overfitting.

These results highlight the importance of balancing model complexity with data availability to achieve optimal generalization performance.

\subsection{Evaluating Marginal Probabilities of Win, Loss, and Tie} \label{sec:marginal}

The errors in previous sections were calculated based on pairwise probabilities, such as \(P_{i \succ j}\), with errors averaged over all pairwise comparisons in \(E\). Here, we assess the \emph{marginal} probabilities for each competitor, which represent the overall likelihood of winning, losing, or tying against all other competitors. Specifically, we denote these probabilities respectively by \(P(i \succ V \setminus \{i\} \,|\, E)\), \(P(i \prec V \setminus \{i\} \,|\, E)\), and \(P(i \sim V \setminus \{i\} \,|\, E)\), which are defined by
\begin{subequations} \label{eq:marginal}
\begin{align}
    P(i \succ V \setminus \{i\} \,|\, E) &= \sum_{\{i, j\} \in E(i)} P(i \succ j \,|\, \{i, j\}) \, P(\{i, j\} \,|\, E),  \label{eq:marginal-win} \\
    P(i \prec V \setminus \{i\} \,|\, E) &= \sum_{\{i, j\} \in E(i)} P(i \prec j \,|\, \{i, j\}) \, P(\{i, j\} \,|\, E), \label{eq:marginal-loss} \\
    P(i \sim V \setminus \{i\} \,|\, E) &= \sum_{\{i, j\} \in E(i)} P(i \sim j \,|\, \{i, j\}) \, P(\{i, j\} \,|\, E), \label{eq:marginal-tie}
\end{align}
\end{subequations}
where \(E(i) \coloneqq \{ e \in E \,|\, i \in e \}\) represents the set of edges incident to the vertex \(i \in V\), and \(P(\{i, j\} \,|\, E)\) is the probability of observing a match for the pair \(\{i, j\}\), as given by \eqref{eq:prop-match}. For brevity, we denote the marginal probabilities in \eqref{eq:marginal-win} to \eqref{eq:marginal-tie} by \( P_{i \succ V_{-i}} \), \( P_{i \prec V_{-i}} \), and \( P_{i \sim V_{-i}} \), respectively, where \( V_{-i} \coloneqq V \setminus \{i\} \).

Here, we evaluate the goodness of fit of the models by comparing the predicted marginal probabilities of winning, losing, and tying for each competitor against their corresponding empirical marginal probabilities. \Cref{fig:prediction-error} illustrates the marginal probabilities for a selected set of models. Specifically, the first two rows of the figure show results from the BT models (Models 1 and 4 of \Cref{tab:model-selection}, with and without modified input data), while rows 3 to 5 correspond to the Rao-Kupper models (Model 7 as the standard model with one tie parameter corresponding to \(k_{\mathrm{tie}} = 0\), Model 8 with factored tie model and \(k_{\mathrm{tie}} = 1\), and Model 10 with factored tie model and \(k_{\mathrm{tie}} = 20\)). Results for the Davidson models are omitted as they closely resemble those of the Rao-Kupper models under similar conditions.

\begin{figure}[hpt!]
    \centering
    \includegraphics[width=\textwidth]{\figdir/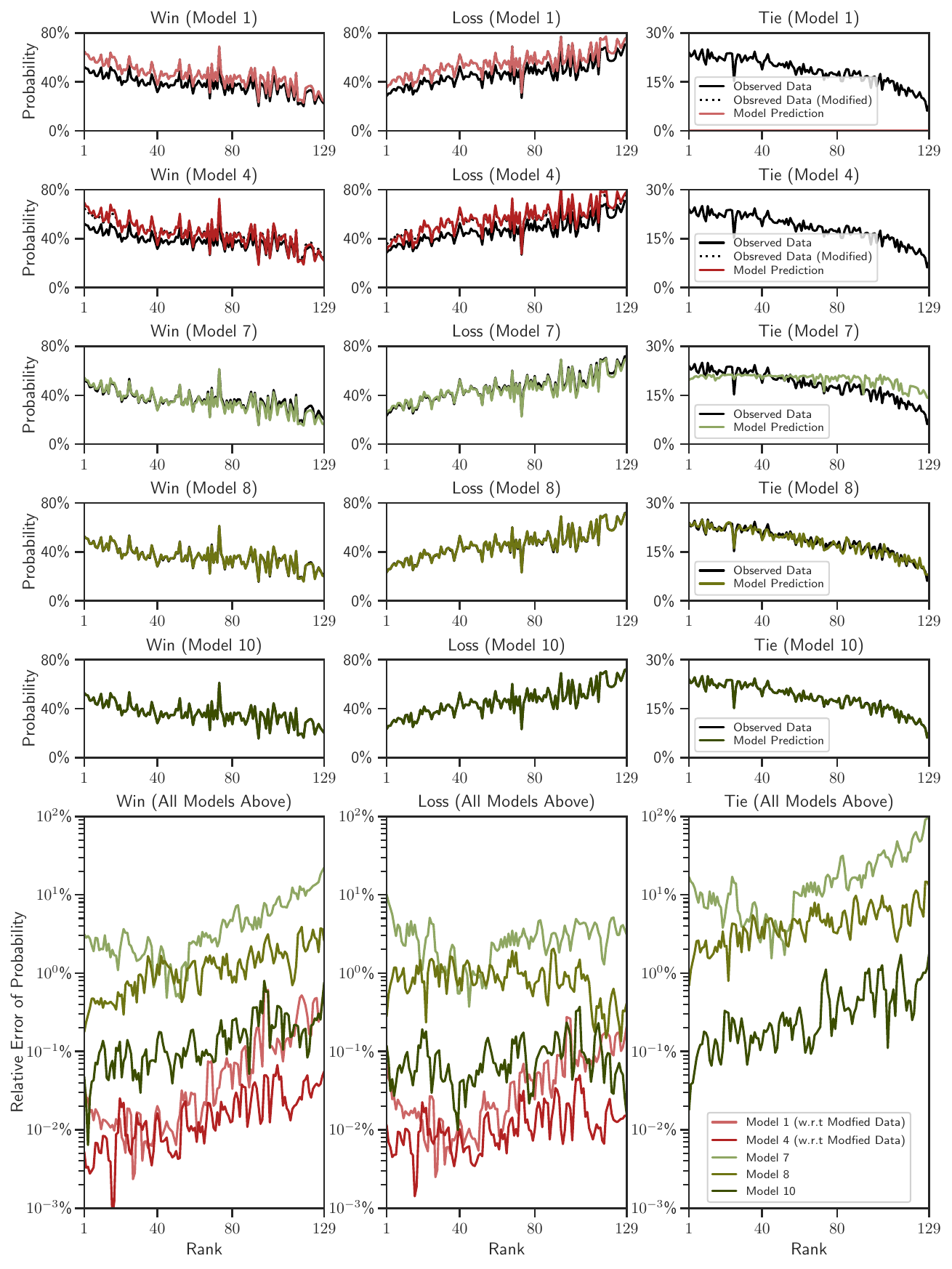}
    \caption{Comparison of predicted (colored curves) and empirical (black curves) marginal probabilities of win (left), loss (middle), and tie (right) for selected models. First and second rows: BT models, third row: original Rao-Kupper with tie factor \(k_{\mathrm{tie}} = 0\), fourth and fifth rows: generalized Rao-Kupper with tie factors \(k_{\mathrm{tie}} = 1\) and \(k_{\mathrm{tie}} = 20\). Sixth row: relative errors for rows one to five, calculated between predicted and empirical probabilities.}
    \label{fig:prediction-error}
\end{figure}

The left, middle, and right columns in the figure show the marginal probabilities of win (\(P_{i \succ V_{-i}}\)), loss (\(P_{i \prec V_{-i}}\)), and tie (\(P_{i \sim V_{-i}}\)), respectively. Each row shares the same legend, shown only in the rightmost column. The abscissa represent competitors ordered by their rank in Model 18. The colored curves represent predicted marginal probabilities for each model, with red-themed curves for BT models and green-themed curves for Rao-Kupper models. The black curve represents the empirical marginal probabilities from the observed data, though in some cases, it may overlap with the colored curves. The relative error between model predictions and empirical probabilities is presented in the sixth row, using the same color scheme for consistency. Key observations are as follows:

First, in the top two rows of the figure, the BT model predictions (red curves) noticeably deviate from the empirical probabilities (black curve). This is because the BT models do not account for ties, resulting in a different distribution of win and loss probabilities. To provide a fair comparison, we compare BT model predictions with adjusted empirical probabilities, represented by the dotted black curve, which excludes ties. Accordingly, the relative error for BT models in the sixth row is computed against these adjusted probabilities. By contrast, Rao-Kupper models are compared directly with empirical probabilities from the input data, which include ties.

Second, in the last row of the figure, we observe that the errors for BT models are generally lower than those of the Rao-Kupper models. This is due to the smaller output dimension of the BT models, which fit only win and loss outcomes. If BT models were compared to the actual empirical probabilities (solid black curves), their error would be higher than that of the Rao-Kupper models. However, such a comparison would be unfair, as BT models are trained on modified data that does not account for ties.

Finally, the original Rao-Kupper model with a single tie parameter (Model 7, shown in the third row) exhibits errors on the scale of \(1\) to \(10\), making it impractical for real-world applications, particularly in predicting ties. However, our generalized Rao-Kupper models, which incorporate factored tie models (Models 8 and 10, shown in the fourth and fifth rows), demonstrate a substantial improvement in accuracy. This enhancement not only elevates the prediction of ties but also improves the prediction of win and loss outcomes by one to two orders of magnitude. This result is significant, as it brings the Rao-Kupper and Davidson models back into practical relevance, offering richer predictions for win, loss, and tie outcomes---unlike the BT models---without compromising on accuracy.

\section{Comparative Analysis of Ranking Variability} \label{sec:analysis-rank}

This section expands on the ranking comparisons presented in \Cref{sec:rank-vis}, offering a detailed analysis of how rankings vary across statistical models with different parameter configurations. Visualizations include a representative model's score distribution (\Cref{sec:score-plot}), a comparison of rankings across models (\Cref{sec:bump-chart}), and Kendall’s \(\tau\) correlation matrix (\Cref{sec:kendall-tau}) to assess ranking similarities. Hierarchical clustering (\Cref{sec:clustering}) further uncovers structural patterns in model-based rankings, highlighting the distinct effects of covariance and tie parameters on ranking alignment.

\subsection{Representative Model Rankings and Score Distribution} \label{sec:score-plot}

\begin{figure}[pt!]
    \centering
    \includegraphics[width=\textwidth]{\figdir/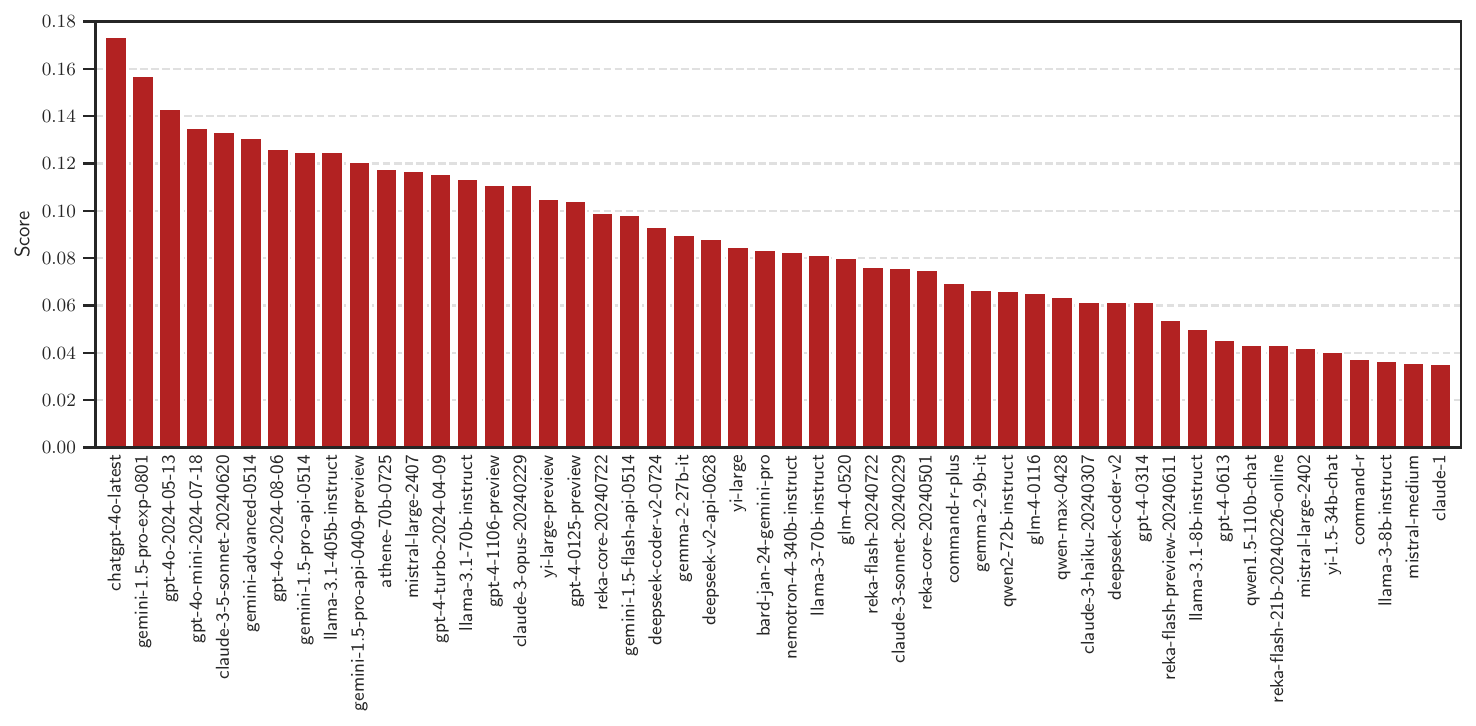}
    \caption{Competitors ranked by their scores according to Model 18 from \Cref{tab:model-selection}.}
    \label{fig:scores}
\end{figure}

The score plot in \Cref{fig:scores} illustrates the rankings and scores of chatbots, using Model 18 from \Cref{tab:model-selection} as an example. This generalized Rao-Kupper (RK) model incorporates covariance (\(k_{\mathrm{cov}} = 3\)) and tie factor parameters (\(k_{\mathrm{tie}} = 20\)), achieving strong goodness-of-fit metrics. Each bar represents a chatbot, ordered from the highest to the lowest score, showcasing their relative performance.

Other models with similar configurations, such as the generalized Davidson model (e.g., Model 30), produce comparable score distributions and rankings. \Cref{sec:bump-chart} examines ranking similarity and variability across a selection of models with diverse parameterizations.

\subsection{Exploring Ranking Similarity Across Models} \label{sec:bump-chart}

The bump chart in \Cref{fig:bump_chart} visualizes how rankings vary across 12 selected models from \Cref{tab:model-selection}. Each column represents a model, arranged from simpler models on the right to more complex ones on the left, while each row tracks a chatbot's rank across models.

\begin{figure}[hpt!]
    \centering
    \ifjournal
        \includegraphics[width=\textwidth, height=0.96\textheight, keepaspectratio]{\figdir/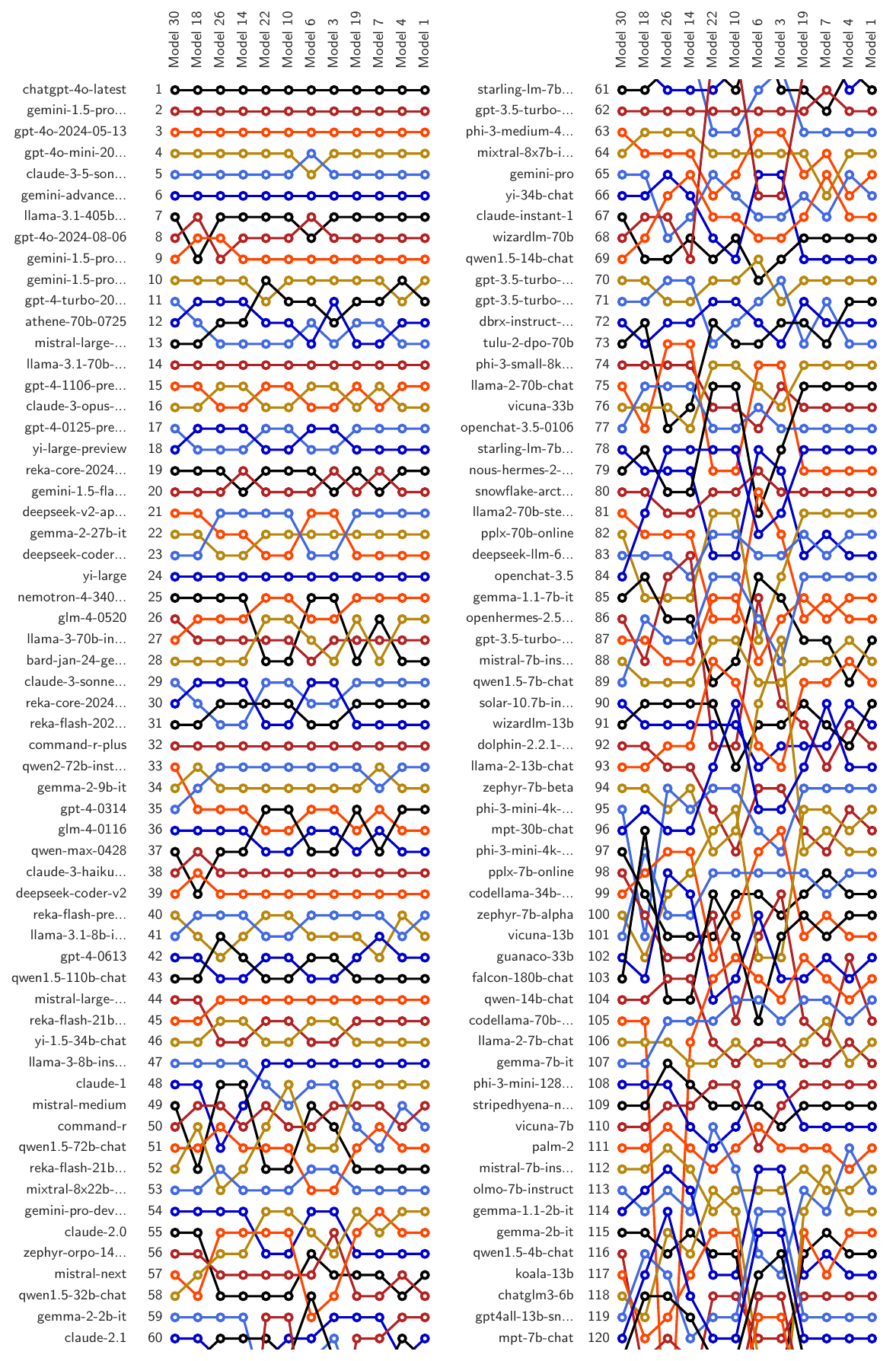}
    \else
        \includegraphics[width=\textwidth, height=0.925\textheight, keepaspectratio]{\figdir/bump_chart.pdf}
    \fi
    \caption{Bump chart comparing chatbot rankings across 12 statistical models, with Model 1 representing the Elo-based ranking method used in \cite{CHIANG-2024}. Models are arranged with increasing complexity from right to left. Lines track changes in ranking for each chatbot.}
    \label{fig:bump_chart}
\end{figure}

To comprehensively explore ranking similarity, we selected 12 models with diverse configurations. These include baseline models without covariance or tie factor generalizations (Models 1, 4, 7, and 19), models incorporating covariance (\(k_{\mathrm{cov}} = 0\), Models 3 and 6), models introducing tie factor parameters (\(k_{\mathrm{tie}} = 20\), Models 10 and 22), and the most complex models combining both covariance (\(k_{\mathrm{cov}} = 3\)) and tie factors (\(k_{\mathrm{tie}} = 20\), Models 18 and 30).

The rankings of top competitors remain consistent across all models, reflecting stable performance at the top. In contrast, discrepancies grow more pronounced at lower ranks, where models with additional parameters for ties and covariances produce differing results. Notably, similar models---especially those with comparable configurations---tend to yield similar rankings, with occasional exceptions. A detailed analysis of ranking similarities is provided in \Cref{sec:kendall-tau}.

\subsection{Quantifying Ranking Similarity Across Models} \label{sec:kendall-tau}

While the bump chart in \Cref{sec:bump-chart} provides a visual overview of ranking shifts across models, quantifying the degree of similarity between these rankings requires statistical correlation measures. A variety of methods are available, including Pearson’s correlation, Spearman’s \(\rho\), and Kendall’s \(\tau\) \citep{KENDALL-1938}. Pearson’s correlation is most suitable for assessing linear relationships between continuous variables, while Spearman’s rank correlation generalizes it for monotonic relationships in ordinal data. However, neither offers as direct an interpretation for pairwise ranking comparisons as Kendall’s \(\tau\), which evaluates the ordering of pairs directly, making it particularly well-suited for ordinal data.

Kendall’s ranking correlation quantifies the agreement between two ranking orders by comparing the relative ordering of pairs of objects. Given two score vectors, \(\vect{x}^p = (x_1^p, \dots, x_m^p)\) and \(\vect{x}^q = (x_1^q, \dots, x_m^q)\), from the \(p\)-th and \(q\)-th models, respectively, \(\tau_{pq}\) reflects the extent to which the pairwise orderings are concordant. Two pairs, \((x_i^p, x_j^p)\) and \((x_i^q, x_j^q)\), are \emph{concordant} if they maintain the same relative ordering---i.e., either \(x_i^p < x_j^p\) and \(x_i^q < x_j^q\), or \(x_i^p > x_j^p\) and \(x_i^q > x_j^q\). Conversely, they are \emph{discordant} if the orderings are reversed. This concordance criterion can be expressed as \(\sgn(x_i^p - x_j^p) \, \sgn(x_i^q - x_j^q) = 1\) for concordant pairs, and \(-1\) for discordant pairs.

The Kendall correlation \(\tau_{pq}\) is defined as the difference between the probabilities of concordant and discordant pairs and can be empirically obtained by computing the difference of counts for all concordant and discordant pairs, normalized by the total number of pairs, \(\binom{m}{2}\), as
\begin{equation}
    \tau_{pq} = \frac{1}{\binom{m}{2}} \sum_{1 \leq i < j \leq m} \sgn(x_i^p - x_j^p) \, \sgn(x_i^q - x_j^q).
\end{equation}
This correlation ranges from \(-1\) to \(1\), where \(\tau_{pq} = 1\) indicates identical rankings, and \(\tau_{pq} = -1\) implies a complete reversal in ranking order (i.e., \(x_i^p < x_j^p\) implies \(x_i^q > x_j^q\) and vice versa). The probability that a pairwise order \(x_i^p < x_j^p\) in one ranking aligns with \(x_i^q < x_j^q\) in another is \(\frac{1}{2}(\tau_{pq} + 1)\) \citep[p. 410]{GIBBONS-2003}.

In this analysis, we compute the Kendall correlation matrix \(\gtens{\tau} = [\tau_{pq}]\), \(p, q = 1, \dots, 30\), between each pair of models in \Cref{tab:model-selection} using Kendall’s \(\tau\)-b method, which also accounts for ties in the scores \citep{KENDALL-1945}.

\begin{figure}[t!]
    \centering
    \includegraphics[width=\textwidth]{\figdir/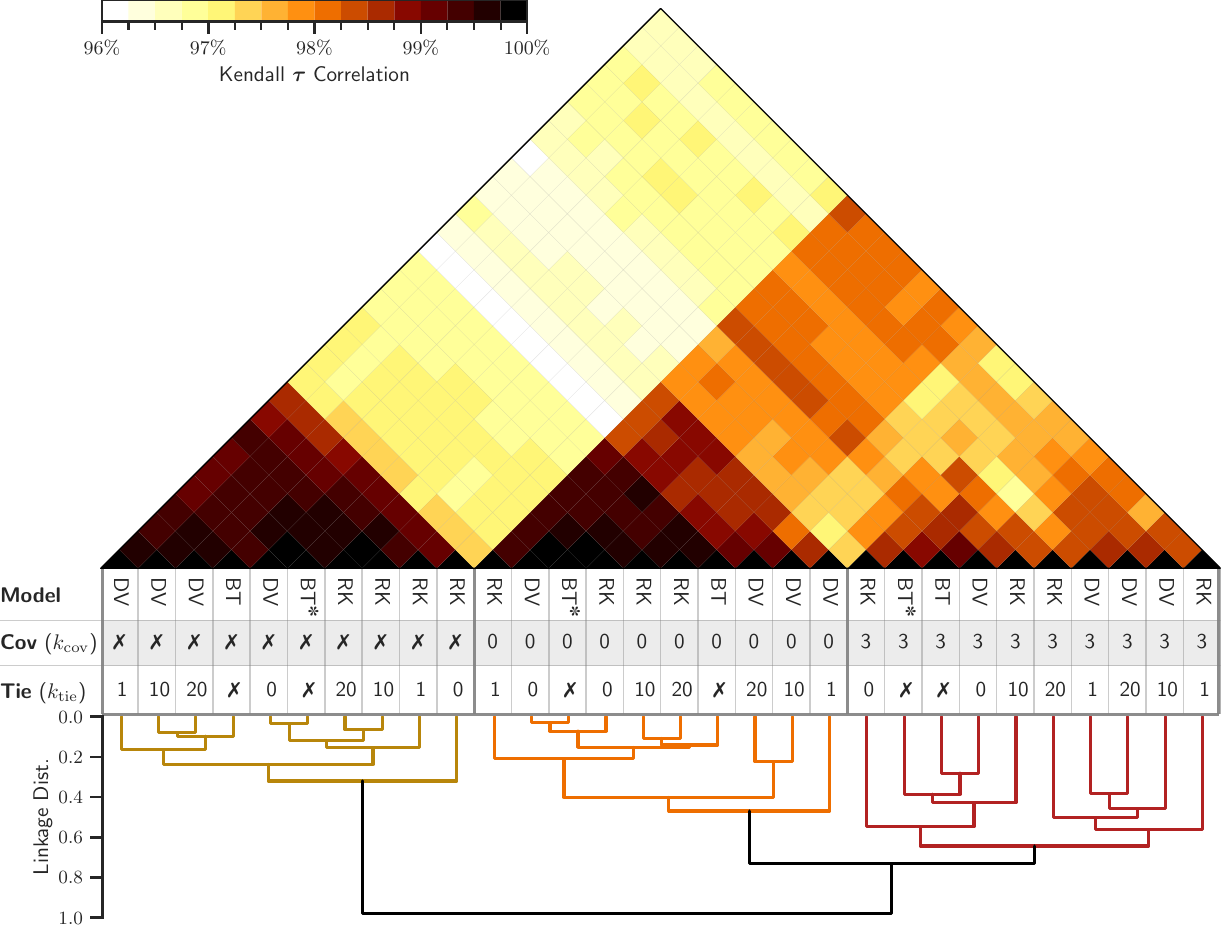}
    \caption{Kendall \(\gtens{\tau}\) ranking correlation matrix for models in \Cref{tab:model-selection}. The table below categorizes models by configuration: model type (first row), covariance factor \(k_{\mathrm{cov}}\) (second row), and tie factor \(k_{\mathrm{tie}}\) (third row). In the first row, code names \textsf{BT}, \textsf{RK}, and \textsf{DV} denote Bradley-Terry, Rao-Kupper, and Davidson models, respectively, with \textsf{BT}\textsuperscript{\ding{93}} indicating the Bradley-Terry model treating ties as half wins and half losses. The model order is determined by hierarchical clustering on Kendall's correlation values, highlighting two main clusters based on the presence of covariance, with a further division within the covariance group based on factor \(k_{\mathrm{cov}}\). A dendrogram below the table illustrates this clustering.}
    \label{fig:kendall}
\end{figure}

\Cref{fig:kendall} shows the resulting \(\gtens{\tau}\) matrix, where each cell represents the Kendall correlation between two models. We present only the lower-triangular half of this symmetric matrix for clarity. An adjacent table below the matrix describes each model’s type (first row), covariance factor \(k_{\mathrm{cov}}\) (second row), and tie factor \(k_{\mathrm{tie}}\) (third row). The codes \textsf{BT}, \textsf{RK}, and \textsf{DV} represent Bradley-Terry, Rao-Kupper, and Davidson models, respectively, while \textsf{BT}\textsuperscript{\ding{93}} denotes the Bradley-Terry model with ties treated as half win and half loss. Across our 30 models, the Kendall correlation ranged from \(0.96\) to \(1\), indicating overall similarity in rankings but with distinguishable differences driven by model parameter variations. Further insights into these parameter-driven distinctions emerge by reordering the correlation matrix, as detailed in the next section.

\subsection{Identifying Ranking Similarities via Hierarchical Clustering} \label{sec:clustering}

To better interpret the distinctions between models, we performed hierarchical agglomerative clustering \citep[Section 14.3.12]{HASTIE-2009} on the distance matrix \(\tens{J} - \gtens{\tau}\), where \(\tens{J}\) is a matrix of all ones, converting \(\gtens{\tau}\) into a dissimilarity measure. Using optimal leaf ordering \citep{BARJOSEPH-2001}, this clustering reorders the rows and columns of \(\gtens{\tau}\) to reveal natural groupings based on ranking similarity across models.

The ordering of models shown in \Cref{fig:kendall} is directly the arrangement produced by hierarchical clustering, visualized in the dendrogram below the figure, which reveals a distinct block structure in the \(\gtens{\tau}\) matrix. The clustering first divides the models into two main groups: those without covariance (indicated by \ding{55}, columns 1 to 10, shown by the yellow branch) and those with covariance (columns 11 to 30). Within the group of models with covariance, further subdivision occurs, with models having \(k_{\mathrm{cov}} = 0\) (columns 11 to 20, shown by the orange branch) forming a distinct sub-block from those with \(k_{\mathrm{cov}} = 3\) (columns 21 to 30, shown by the red branch). This hierarchical structure highlights that the presence and type of covariance parameter \(k_{\mathrm{cov}}\) are primary factors influencing ranking similarity, more so than the tie factor \(k_{\mathrm{tie}}\).

While covariance modeling prominently influences ranking consistency, earlier results (see \Cref{sec:results,sec:evaluations}) demonstrated that our generalized tie modeling significantly enhances inference and predictive accuracy. This dual impact---covariance structure shaping ranking alignment and generalized tie modeling improving model fit and accuracy---illustrates the complementary strengths of these two generalizations in paired comparison models.

\section{Relationship Between LLM Characteristics and Scores} \label{sec:epoch-ai}

This section explores the relationship between the scores derived from our ranking framework and three key characteristics of large language models: the number of parameters, computational budget (FLOPs), and dataset size. Using data from \citet{EPOCH-2022}, which provides these attributes for various LLMs, we matched these models with those evaluated in our analysis. To ensure reliability, we filtered the data to include only models with confident values for these characteristics. Among the matched models, 31, 27, and 24 LLMs had available and confident values for the number of parameters, FLOPs, and dataset size, respectively. \Cref{fig:epoch-ai} presents scatter plots illustrating these relationships, with the abscissa displayed on a logarithmic scale and dashed regression lines capturing the linear trends. The scores, shown on the ordinate, are derived from our generalized Rao-Kupper model corresponding to model 18 of \Cref{tab:model-selection}.

\begin{figure}[t!]
    \centering
    \includegraphics[width=\textwidth]{\figdir/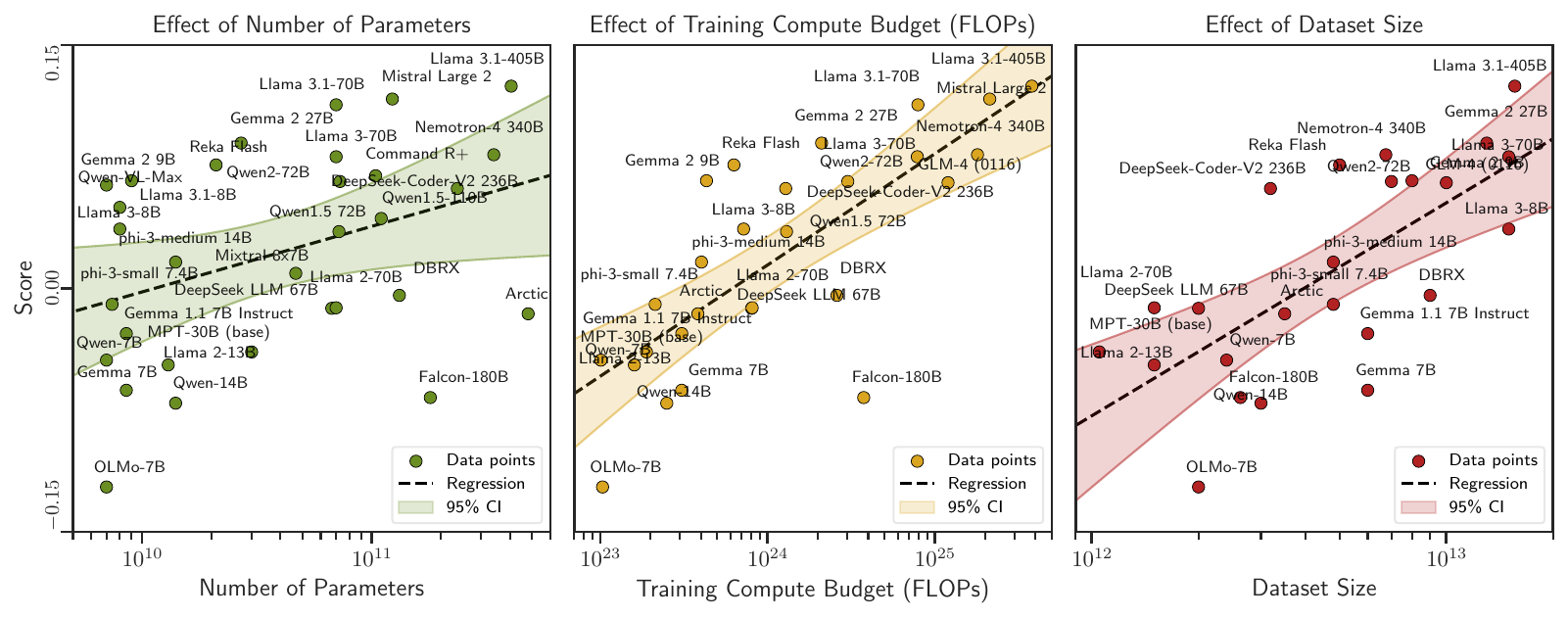}
    \caption{Scatter plots showing the effect of LLM characteristics on scores. The abscissa represents the logarithmic scale of the characteristics: (left) number of parameters, (middle) computational budget (FLOPs), and (right) dataset size. The ordinate (y-axis) is shared across all panels and shown only for the left panel. Each point is labeled with the corresponding LLM name, and the regression lines are shown in dashed black.}
    \label{fig:epoch-ai}
\end{figure}

\Cref{tab:epoch-ai} summarizes the results of independent OLS regressions performed for the logarithm of each characteristic against the scores. The table presents the number of LLMs included in each regression, along with the regression coefficients and their standard errors, \(p\)-values, coefficient of determination (\(R^2\)), and Pearson correlations (\(r\)). The results demonstrate consistent positive associations between all three characteristics and model scores. Computational budget (FLOPs) and dataset size exhibit the strongest associations, with \(R^2 = 0.59\) and \(r = 0.77\) for FLOPs, and \(R^2 = 0.50\) and \(r = 0.70\) for dataset size. The number of parameters shows a weaker association, reflected by \(R^2 = 0.15\) and \(r = 0.38\), though it remains statistically significant.

\begin{table}[ht!]
\centering
\captionsetup{skip=4pt} 
\caption{Regression results examining the relationship between LLM characteristics and scores.} \label{tab:epoch-ai}
\begin{tabular}{lcccccc}
    \toprule
    \textbf{Variable} & \textbf{\# LLMs} & \textbf{Coeff ($\times 100$)} & \textbf{\(p\)-Value} & \textbf{\(R^2\)} & \textbf{Pearson \(r\)} \\
    \midrule
    Number of Parameters      & 31 & \(1.76 \; (\pm 0.79)\) & 3.4144\% & 0.15 & 0.38 \\
    Training Compute (FLOPs)  & 27 & \(2.98 \; (\pm 0.50)\) & 0.0003\% & 0.59 & 0.77 \\
    Dataset Size              & 24 & \(5.70 \; (\pm 1.23)\) & 0.0124\% & 0.50 & 0.70 \\
    \bottomrule
\end{tabular}
\end{table}

Our findings align with prior studies on the scaling laws of LLMs. \citet{KAPLAN-2020} emphasize that model performance depends strongly on scale, encompassing model size, dataset size, and computational budget. \citet{HOFFMANN-2022} further highlight the importance of balancing these factors, demonstrating that compute and dataset size play pivotal roles in maximizing performance within fixed budgets. Consistently, our analysis shows that scores improve across all three characteristics, with particularly strong trends observed for computational budget and dataset size.

The comparatively modest association observed for the number of parameters in our analysis reflects the diminishing returns noted in the literature when model size is increased without proportional scaling of compute and data resources. This observation underscores the importance of balanced scaling for optimal performance, as emphasized by prior studies.

Given the sparsity of available data for proprietary models in our analysis, these conclusions should be interpreted with caution. Further studies incorporating more comprehensive datasets could provide additional insights into the interplay between computational budget, dataset size, and the number of parameters in driving LLM performance.

\section{Implementation and Reproducibility Guide} \label{app:software}

We developed a Python package \texttt{\package}\footnote{\texttt{\package} is available for installation from PyPI at \PYPIURL. Documentation and usage instructions can be found at \DOCURL. The source code is available on GitHub at \GITHUBURL.} that implements the methods presented in this paper. The package allows users to reproduce the numerical results, evaluate model fit, and explore model generalization performance. Below, we provide examples of using \texttt{\package} for common tasks such as model training, evaluation, and visualization. The full documentation, including further functionality and customization options, is available online.

\subsection{Model Training and Visualization}

\Cref{list:package_mwe_1} demonstrates the basic usage of \texttt{\package} for training a statistical model and visualizing results. In this example, we replicate Model 23 from \Cref{tab:model-selection} using the \texttt{Davidson} class, which includes both tie modeling and covariance. The model is instantiated with parameters \(k_{\mathrm{cov}} = 0\) for covariance and \(k_{\mathrm{tie}} = 0\) for the tie factor model. Once instantiated, the model is trained using the BFGS optimization method on the dataset. While \texttt{\package} ships with the dataset used in this paper, the \texttt{load} function also allows users to provide a URL to load an external dataset. See documentation for details.

After training, users can perform inference and prediction, compute the loss function and Jacobian at the optimal parameters or any specified parameter values, and generate leaderboard tables. The package also includes visualization tools to replicate figures such as the match matrix (\Cref{fig:match-matrix}), kernel-PCA and MDS plots (\Cref{fig:kpca,fig:mds}), and hierarchical clustering (\Cref{fig:cluster}), among others.

\lstinputlisting[
style=mystyle,
language=Python,
caption={Basic usage of \texttt{\package} for model training and visualization of results.},
label={list:package_mwe_1},
float=ht!]{leaderbot_mwe_1.py}

\subsection{Model Evaluation: Fit and Consistency Metrics}

\Cref{list:package_mwe_2} demonstrates the evaluation of model fit and consistency for five selected models, chosen from the broader set of 30 models discussed in \Cref{tab:model-selection}, including the original Bradley-Terry model as well as original and generalized versions of the Rao-Kupper and Davidson models.

In this example, each model is trained on the full dataset to evaluate goodness of fit. The script produces a bump chart similar to \Cref{fig:bump_chart}, comparing the rankings generated by the five models. Additionally, it provides tables similar to \Cref{tab:model-selection} and \Cref{tab:goodness-fit}, displaying model selection and goodness-of-fit metrics. These metrics enable users to analyze and compare model consistency in training performance.

\lstinputlisting[
style=mystyle,
language=Python,
caption={Evaluating model fit and consistency metrics across models in \texttt{\package}.},
label={list:package_mwe_2},
float=ht!]{leaderbot_mwe_2.py}

\subsection{Model Generalization: Performance on Test Data}

\Cref{list:package_mwe_3} demonstrates the evaluation of model generalization using a 90/10 train-test split, where the same five models from the previous listing are trained on 90\% of the data and tested on the remaining 10\%. The resulting RMSE, KLD, and JSD metrics, displayed in a table similar to \Cref{tab:generalization}, offer insight into each model's predictive accuracy and robustness on unseen data.

\lstinputlisting[
style=mystyle,
language=Python,
caption={Evaluating model generalization using train-test split in \texttt{\package}.},
label={list:package_mwe_3},
float=ht!]{leaderbot_mwe_3.py}

\end{appendices}

\end{document}